\documentclass[10pt,a4paper]{article}

\usepackage{amssymb,amsmath,amsthm,amsfonts,dsfont}

\usepackage[breaklinks=true]{hyperref}
\usepackage{url}
\Urlmuskip=0mu plus 1mu

\usepackage[dvips]{epsfig}
\usepackage[latin1]{inputenc}
\usepackage{algorithmic} 
\usepackage{algorithm} 
\usepackage[T1]{fontenc}
\usepackage{multirow}
\usepackage{fancyhdr}
\usepackage{bm}
\usepackage{subfigure}
\usepackage{authblk}

\usepackage{pgfplots}
\usepackage{etex}
\reserveinserts{18}
\usepackage{morefloats}
\usepackage[misc]{ifsym}

\usepackage{array}
\usepackage{colortbl}

\usepackage{atveryend}
\makeatletter
\let\origcitation\citation
\AtEndDocument{\def\mycites{}%
  \def\citation#1{\g@addto@macro\mycites{#1^^J}\origcitation{#1}}}
\AtVeryEndDocument{\newwrite\citeout\immediate\openout\citeout=\jobname.cit
  \immediate\write\citeout{\mycites}\immediate\closeout\citeout}
\makeatother

\usepackage{tikz}
\usepackage{tikz-3dplot}
\usetikzlibrary{fadings,calc}

\tikzfading[name=radialfade, inner color=transparent!0, outer color=transparent!100]

\DeclareMathOperator*{\argmax}{argmax}
\DeclareMathOperator*{\argmin}{argmin}

\newcommand{\cO}{\ensuremath{\mathcal{O}}}

\newcommand{\R}{\ensuremath{\mathbb{R}}}

\newcommand{\e}{\mbox{e}}
\newcommand\T{\rule{0pt}{2.3ex}}
\newcommand\B{\rule[-1.0ex]{0pt}{0pt}}
\newcommand{\quotes}[1]{``#1''}

\newcommand{\samplesize}{\kappa}
\newcommand{\activeset}{\ensuremath{\cS^{\ast}}}
\newcommand{\randomset}{\ensuremath{\cS}}
\newcommand{\activesize}{\ensuremath{s}}

\newcommand{\ball}{\ensuremath{\odot_\delta}}

\newcommand{\be}{\mathbf{e}}
\newcommand{\bu}{\mathbf{u}}

\newcommand{\cS}{\ensuremath{\mathcal{S}}}

\newtheorem{lemma}{Lemma}
\newtheorem{proposition}{Proposition}
\newtheorem{theorem}{Theorem}

\title{Fast and Scalable Lasso via Stochastic Frank-Wolfe Methods with a Convergence Guarantee}

\author[1]{Emanuele Frandi}
\author[2]{Ricardo \~Nanculef}
\author[3]{Stefano Lodi}
\author[4]{Claudio Sartori}
\author[5]{Johan A. K. Suykens}
\affil[1,5]{\small ESAT-STADIUS, KU Leuven, Belgium \texttt{\{efrandi,johan.suykens\}@esat.kuleuven.be}}
\affil[2]{\small Department of Informatics, Federico Santa Mar\'ia Technical University, Chile \texttt{jnancu@inf.utfsm.cl}}
\affil[3,4]{\small Department of Computer Science and Engineering, University of Bologna, Italy \texttt{\{stefano.lodi,claudio.sartori\}@unibo.it}}
\date{}

\begin{document}

\maketitle

\begin{abstract}
Frank-Wolfe (FW) algorithms have been often proposed over the last few years as efficient solvers for a variety of optimization problems arising in the field of Machine Learning. The ability to work with cheap projection-free iterations and the incremental nature of the method 
make FW a very effective choice for many large-scale problems where computing a sparse model is desirable. 

In this paper, we present a high-performance implementation of the FW method tailored to solve large-scale Lasso regression problems, based on a randomized iteration, and prove that the convergence guarantees of the standard FW method are preserved in the stochastic setting.
We show experimentally that our algorithm outperforms several existing state of the art methods, including the Coordinate Descent algorithm by Friedman \textit{et al.} (one of the fastest known Lasso solvers), on several benchmark datasets with a very large number of features, without sacrificing the accuracy of the model. 
Our results illustrate 
that the algorithm is able to generate the complete regularization path on problems of size up to four million variables in less than one minute.
\end{abstract}

\section{Introduction}

Many Machine Learning and Data Mining tasks can be formulated, at some stage, in the form of an optimization problem. As constantly growing amounts of high dimensional data are becoming available in the Big Data era, a fundamental thread in research is the developement of high-performance implementations of algorithms tailored to solving these problems in a very large-scale setting.
One of the most popular and powerful techniques for high-dimensional data analysis 
is the \emph{Lasso} \cite{tibshirani96}.  
%
In the last decade there has been intense interest in this method, and several papers describe generalizations and variants of the Lasso \cite{tibshirani2011}. 
In the context of supervised learning, it was recently proved that the Lasso problem can be reduced to an equivalent SVM formulation, which potentially allows one to leverage a wide range of efficient algorithms devised for the latter problem \cite{jaggi_lasso}.
For unsupervised learning, the idea of {Lasso regression} has been used in~\cite{lasso_biclustering} for bi--clustering in biological research.

From an optimization point of view, the Lasso can be formulated as an $\ell_1$-regularized least squares problem, and large-scale instances must usually be tackled by means of an efficient first-order algorithm. Several such methods have already been discussed in the literature. Variants of Nesterov's Accelerated Gradient Descent, for example, guarantee an optimal convergence rate among first-order methods \cite{nesterov}. Stochastic algorithms such as Stochastic Gradient Descent and Stochastic Mirror Descent have also been proposed for the Lasso problem \cite{langford2009sparse,shwartz2011}. More recently, Coordinate Descent (CD) algorithms \cite{friedman2007,friedman2010}, along with their stochastic variants \cite{shwartz2011,richtarik2014}, are gaining popularity due to their efficiency on structured large-scale problems. In particular, the CD implementation of Friedman \textit{et al.} mentioned above is specifically tailored for Lasso problems, and is currently recognized as one of the best solvers for this class of problems.

The contribution of the present paper in this context can be summarized as follows:
\begin{itemize}

\item We propose a high-performance stochastic implementation of the classical Frank-Wolfe (FW) algorithm to solve the Lasso problem. We show experimentally how the proposed method is able to efficiently scale up to problems with a very large number of features, improving on the performance of other state of the art methods such as the Coordinate Descent algorithm in \cite{friedman2010}. 

\item We include an analysis of the complexity of our algorithm, and prove a novel convergence result that yields an $\cO(1/k)$ convergence rate analogous (in terms of expected value) to the one holding for the standard FW method. 

\item We highlight how the properties of the FW method allow to obtain solutions that are significantly more sparse in terms of the number of features compared with those from various competing methods, while retaining the same optimization accuracy.

\end{itemize}
On a broader level, and in continuity with other works from the recent literature \cite{SWAP_paper,harchaoui14,SSCI}, the goal of this line of research is to show how FW algorithms provide a general and flexible optimization framework, encompassing several fundamental problems in Machine Learning. 

\subsubsection*{Structure of the Paper}
In Section~\ref{lasso-sec}, we provide an overview of the Lasso problem and its formulations, and review some of the related literature. Then, in Section~\ref{fw-opt}, we discuss FW optimization and specialize the algorithm for the Lasso problem. The randomized algorithm used in our implementation is discussed in Section~\ref{fw_lasso_sec}, and its convergence properties are analyzed. In Section~\ref{experiments} we show several experiments on benchmark datasets and discuss the obtained results. Finally, Section~\ref{conclusions} closes the paper with some concluding remarks.


\section{The Lasso Problem}\label{lasso-sec}

Suppose we are given data points $(x_\ell, y_\ell)$, $\ell=1,2,\ldots,m$, where $x_\ell=(x_{\ell1},x_{\ell2},$ $\ldots,x_{\ell p})^T \in \mathbb{R}^p$ 
are some predictor variables and $y_\ell$ the respective responses. 
A common approach in statistical modeling and Data Mining is the linear regression model, which predicts $y_\ell$ as a linear combination of the input attributes: 
\[
\hat y_\ell = \sum_{i=1}^{p} \alpha_i x_{\ell i} + \alpha_0 \ .
\]
In a high-dimensional, low sample size setting ($p \gg m$), estimating the coefficient vector $\alpha=(\alpha_1,\alpha_2,\ldots,\alpha_p)^T \in \mathbb{R}^p$ using ordinary least squares leads to an ill-posed problem, i.e. the solution becomes not unique and unstable. 
In this case, a widely used approach for model estimation 
is regularization. Among regularization methods, the Lasso is of particular interest due to its ability to perform variable selection and thus obtain more interpretable models \cite{tibshirani96}. 

\subsection{Formulation}

Let $X$ be the $m \times p$ design matrix where the data is arranged row-wise, i.e. $X = [x_1,\ldots,x_m]^T$. 
Similarly, let $y = (y_1,\ldots,y_m)^T$ be the $m$-dimensional response vector. Without loss of generality, we can assume $\alpha_0 = 0$ (e.g. by centering the training data such that each attribute has zero mean) \cite{tibshirani96}. The Lasso estimates the coefficients as the solution to the following problem
\begin{equation}\label{eq:LASSO}
\underset{\alpha}{\min} \ f(\alpha) = \tfrac{1}{2} \ \| X\alpha - y \|_2^2 \ \ \mbox{s.t.} \ \ \| \alpha \|_{1} \leq \delta \ ,
\end{equation}
where the $\ell_1$-norm constraint $\| \alpha \|_{1} = \sum_i |\alpha_i| \leq \delta$ has the role of promoting sparsity in the regression coefficients. It is well-known that the constrained problem (\ref{eq:LASSO}) is equivalent to the unconstrained problem
\begin{equation}\label{eq:LASSO-unc}
\underset{\alpha}{\min} \ \tilde f(\alpha) = \tfrac{1}{2} \ \| X\alpha - y \|_2^2 + \lambda \| \alpha \|_{1} \ ,
\end{equation}
in the sense that given a solution $\alpha^*$ of (\ref{eq:LASSO-unc}) with a certain value for parameter $\bar \lambda$, it is possible to find a $\bar \delta$ such that $\alpha^*$ is also a solution of (\ref{eq:LASSO}), and vice versa. 
Specifically, if one has an exact solution $\alpha^*$ of problem (\ref{eq:LASSO-unc}), corresponding to a given $\bar \lambda$, it is easy to see that $\alpha^*$ is also a solution of (\ref{eq:LASSO}) for $\bar \delta = \|\alpha^*\|_1$. It is immediate to see that $\lambda = 0$ corresponds to the unconstrained solution $\alpha^R$, i.e. to the plain least-squares regression, which can be obtained by setting $\delta > \|\alpha^R\|_1$ in (\ref{eq:LASSO}). On the opposite, $\delta = 0$ in (\ref{eq:LASSO}) corresponds to the null solution, which is obtained for large enough values of $\lambda$ (specifically, for the case $p > m$, it can be shown that the solution of (\ref{eq:LASSO-unc}) is $\alpha^{*} = 0$ whenever $\lambda > \| X^T y \|_{\infty}$ \cite{turlach2005}).
Since the optimal tradeoff between the sparsity of the model and its predictive power is not known a-priori, practical applications of the Lasso require to find a solution and the profiles of estimated coefficients for a range of values of the regularization parameter $\delta$ (or $\lambda$ in the penalized formulation). This is known in the literature as the Lasso \textit{regularization path} \cite{friedman2010}. 

\subsection{Relevance and Applications}

The Lasso is part of a powerful family of regularized linear regression methods for high-dimensional data analysis, which also includes ridge regression (RR) \cite{hoerl1970ridge,hastie01statisticallearning}, 
the ElasticNet \cite{elasticnet}, and several recent extensions thereof \cite{zou2006adaptive,zou2009adaptive,tibshirani2005sparsity}. From a statistical point of view, they can be viewed as methods for trading off the bias and variance of the coefficient estimates in order to find a model with better predictive performance. From a Machine Learning perspective, they allow to adaptively control the capacity of the model space in order to prevent overfitting. In contrast to RR, which is obtained by substituting the $\ell_1$ norm in (\ref{eq:LASSO-unc}) by the squared $\ell_2$ norm $\sum_i |\alpha_i|^2$, it is well-known that the Lasso does not only reduce the variance of coefficient estimates but is also able to perform variable selection by shrinking many of these coefficients to zero. Elastic-net regularization trades off $\ell_1$ and $\ell_2$ norms using a ``mixed'' penalty $\Omega(\alpha) = \gamma \| \alpha \|_{1} + (1- \gamma) \| \alpha \|_{2}$ which requires tuning the additional parameter $\gamma$ \cite{elasticnet}. $\ell_p$ norms 
with $p \in [0,1)$ can enforce a more aggressive variable selection, but lead to computationally challenging non-convex optimization problems. For instance, $p=0$, which corresponds to ``direct'' variable selection, leads to an NP-hard problem \cite{weston2003}. 

Thanks to its ability to perform variable selection and model estimation simultaneously, the Lasso is used in many fields involving high-dimensional data. These scenarios have become of increasing importance in the last decades since, as technologies for collecting and processing data evolve, classification and regression problems with a large number of candidate predictors have become ubiquitous. Advances in molecular technologies, for example, enable scientists to measure the status of thousands to millions of biomolecules simultaneously \cite{gui2005penalized}. In text analysis, vector space models for representing documents easily leads to several thousands or even millions of document-term counts that correspond to potentially informative variables \cite{E2006-paper,Ifrim2008}. Similarly, in the analysis of functional magnetic resonance imaging (fMRI) data, one can easily obtain datasets with millions of \textit{voxels} representing the activity of particular portions of the brain \cite{li2009voxel}. In all these cases, the number dimensions or attributes $p$ can far exceed the number of data instances $m$. 
It is also worth mentioning that popular data analysis tools such as $\ell_1$-regularized logistic regression and penalized Cox regression models can be implemented using iterative algorithms which solve Lasso problems at each iteration \cite{friedman2010}. 

\subsection{Related Work}

Problem (\ref{eq:LASSO}) is a quadratic programming problem with a convex constraint, which in principle may be solved using standard techniques such as interior-point methods, guaranteeing convergence in few iterations. However, the computational work required \emph{per iteration} as well as the memory demanded by these approaches make them practical only for small and medium-sized problems. A faster specialized interior point method for the Lasso was proposed in \cite{kim2007interior}, which however  compares unfavorably with the baseline used in this paper \cite{friedman2010}.

One of the first efficient algorithms proposed in the literature for finding a solution of (\ref{eq:LASSO-unc}) is the Least Angle Regression (LARS) by Efron \textit{et al.} \cite{lars_paper}. As its main advantage, LARS allows to generate the entire Lasso regularization path with the same computational cost as standard least-squares via QR decomposition, i.e. $\cO(mp^2)$, assuming $m < p$ \cite{hastie01statisticallearning}. 
At each step, it identifies the variable most correlated with the residuals of the model obtained so far and includes it in the set of ``active'' variables, moving the current iterate in a direction equiangular with the rest of the active predictors. It turns out that the algorithm we propose makes the same choice but updates the solution differently using cheaper computations. 
A similar homotopy algorithm to calculate the regularization path has been proposed in \cite{turlach2005}, which differs slightly from LARS in the choice of the search direction.

More recently, it has been shown by Friedman \textit{et al.} that a careful implementation of the Coordinate Descent method (CD) provides an extremely efficient solver \cite{friedman2007}, \cite{friedman2010}, \cite{friedman2012}, which also applies to more general models such as the ElasticNet proposed by Zou and Hastie \cite{elasticnet}. In contrast to LARS, this method cyclically chooses one variable at a time and performs a simple analytical update. The full regularization path is built by defining a sensible range of values for the regularization parameter and taking the solution for a given value as warm-start for the next. This algorithm has been implemented into the Glmnet package
and can be considered the current standard for solving this class of problems. 
Recent works have also advocated the use of Stochastic Coordinate Descent (SCD) \cite{shwartz2011}, where the order of variable updates is chosen randomly instead of cyclically. This strategy can prevent the adverse effects caused by possibly unfavorable orderings of the coordinates, and allows to prove stronger theoretical guarantees compared to the plain CD \cite{richtarik2014}.

Other methods for $\ell_1$-regularized regression may be considered. For instance, Zhou \textit{et al.} recently proposed a geometrical approach where the Lasso is reformulated as a nearest point problem and solved using an algorithm inspired by the classical Wolfe method \cite{geolasso}. However, the popularity and proved efficiency of Glmnet on high-dimensional problems make it the chosen baseline in this work. 



\section{Frank-Wolfe Optimization}\label{fw-opt}
One of the earliest constrained optimization approaches  \cite{wolfe1954,wolfe1970}, the Frank-Wolfe algorithm has recently seen a sudden resurgence in interest from the Optimization and Machine Learning communities, due to its powerful theoretical properties and proved efficiency in the context of large-scale problems \cite{clarkson_coresets,Jaggi2013ICMLa,harchaoui14}. 
On the theoretical side, FW methods come with iteration complexity bounds that are independent of the number of variables in the problem, and sparsity guarantees that hold during the whole execution of the algorithm \cite{clarkson_coresets,Jaggi2013ICMLa}. In addition, several variants of the basic procedure have been analyzed, which can improve the convergence rate and practical performance of the basic FW iteration \cite{GuelatMarcotte,SWAP_paper,jaggi_linconv,PARTAN_paper}. 
From a practical point of view, they have emerged as efficient alternatives to traditional methods in several contexts, such as large-scale SVM classification \cite{CIARP,IJPRAI11,SWAP_paper,PARTAN_paper} and nuclear norm-regularized matrix recovery \cite{Jaggi2013ICMLa,SSCI}. In view of these developements, FW algorithms have come to be regarded as a suitable approach to large-scale optimization 
in various areas of Machine Learning, statistics, bioinformatics and related fields \cite{signoretto14book,Jaggi2013ICMLb}.

Overall, though, the number of works showing experimental results for FW on practical applications is limited compared to that of the theoretical studies which have appeared in the literature. In the context of problems with $\ell_1$-regularization or sparsity constraints, the use of FW has been discussed in \cite{shwartz2010}, but no experiments are provided. A closely related algorithm has been proposed in \cite{geolasso}, however its implementation has a high computational cost in terms of time and memory requirements, and is not suitable for solving large-scale problems on a standard desktop or laptop machine. As such, the current literature does not provide many examples of efficient FW-based software for large-scale Lasso or $l_1$-regularized optimization. 
In this work, we aim to fill this gap by showing how a properly implemented stochastic FW method can best the current state of the art solvers on Lasso problems with a very large number of features.


\subsection{The Standard Frank-Wolfe Algorithm}

The FW algorithm is a general method to solve problems of the form
\begin{equation} \label{eq:GENERICproblem}
\min_{\alpha \in \Sigma} \; f(\alpha),
\end{equation}
where $f: \R^p \rightarrow \R$ is a convex differentiable function, and $\Sigma \subset \R^p$ is a compact convex set. Given an initial guess ${\alpha}^{(0)} \in \Sigma$, the standard {FW} method consists of the steps outlined in Algorithm \ref{alg:FW}. 
\begin{algorithm}
\begin{algorithmic}[1]
\STATE \textbf{Input:} an initial guess $\alpha^{0}$. %
\FOR{$k = 0,1,\ldots$}
	\STATE Define a search direction $d^{(k)}$ by optimizing a linear model:
        \begin{equation}\label{linear_subpb}
        u^{(k)} 
         \in \argmin_{u \,\in\, \Sigma} \, (u-\alpha^{(k)})^T \nabla f({\alpha}^{(k)}), \qquad 
        {d}^{(k)} = {u}^{(k)} - {\alpha}^{(k)}. 
        \end{equation}
	\STATE Choose a stepsize $\lambda^{(k)}$, e.g. via line-search: 
	    \begin{equation}\label{FW_line_search}
	   \lambda^{(k)} \in  \argmin_{\lambda \,\in\, [0,1]}  f({\alpha}^{(k)} + \lambda {d}^{(k)}).
	   \end{equation}
        \STATE Update: 
        ${\alpha}^{(k+1)}=  {\alpha}^{(k)} + \lambda^{(k)} {d}^{(k)}\,.$ 
\ENDFOR
\end{algorithmic}
\caption{\label{alg:FW} {The standard Frank-Wolfe algorithm}}
\end{algorithm}
From an implementation point of view, a fundamental advantage of FW is that the most demanding part of the iteration, i.e. the solution of the linear subproblem (\ref{linear_subpb}), has a computationally convenient solution for several problems of practical interest, mainly due to the particular form of the feasible set. The key observation is that, when $\Sigma$ is a polytope 
(e.g. the unit simplex for $L_2$-SVMs \cite{SWAP_paper}, the $\ell_1$-ball of radius $\delta$ for the Lasso problem (\ref{eq:LASSO}), a spectrahedron in nuclear norm for matrix recovery \cite{Jaggi_spectrahedron}), 
the search in step $3$ can be reduced to a search among the vertices of $\Sigma$. This allows to devise cheap analytical formulas to find $u^{(k)}$, ensuring that each iteration has an overall cost of $\cO(p)$. 
The fact that FW methods work with \textit{projection-free} iterations is also a huge advantage on many practical problems, since a projection step to maintain the feasibility of the iterates (as needed by classical approaches such as proximal methods for Matrix Recovery problems) generally has a super-linear complexity, 
making the solution of large-scale problems difficult in practice \cite{signoretto14book}.

Another distinctive feature of the algorithm is the fact that the solution at a given iteration $K$ can be expressed as a convex combination of the vertices $u^{(k)}$, $k = 0,\ldots,K-1$. Due to the incremental nature of the FW iteration, at most one new extreme point of $\Sigma$ is discovered at each iteration, implying that at most $k$ of such points are active at iteration $k$. Furthermore, this sparsity bound holds for the entire run of the algorithm, effectively allowing to control the sparsity of the solution as it is being computed. 
This fact carries a particular relevance in the context of sparse approximation, and generally in all applications where it is crucial to find models with a small number of features. It also represents, as we will show in our experiments in Section \ref{experiments}, one of the major differences between incremental, forward approximation schemes and more general solvers for $\ell_1$-regularized optimization, which in general cannot guarantee to find sparse solutions along the regularization path.


\subsection{Theoretical Properties}\label{theoretical_sec}

We summarize here some well-known theoretical results for the FW algorithm which are instrumental in understanding the behaviour of the method. We refer the reader to \cite{Jaggi2013ICMLa} for the proof of the following proposition. To prove the result, it is sufficient to assume that $f$ has bounded curvature, which, as explained in \cite{Jaggi2013ICMLa}, is roughly equivalent to the Lipschitz continuity of $\nabla f$.

\begin{proposition}[Sublinear convergence, \cite{Jaggi2013ICMLa}]\label{subconv}
Let $\alpha^{*}$ be an optimal solution of problem (\ref{eq:GENERICproblem}). Then, for any $k \geq 1$, the iterates of Algorithm \ref{alg:FW} satisfy 
\begin{equation*}
f(\alpha^{(k)}) - f(\alpha^{*}) \leq \frac{4C_f}{k+2}\,,
\end{equation*}
where $C_f$ is the \textit{curvature constant} of the objective function. 
\end{proposition}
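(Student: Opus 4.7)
The plan is to combine a single-step descent lemma, obtained from the curvature definition, with an induction on the suboptimality gap $h_k := f(\alpha^{(k)}) - f(\alpha^*)$. I would start by recording the two ingredients I need. The curvature constant $C_f$ is defined precisely so that, for every $\alpha, u \in \Sigma$ and every $\lambda \in [0,1]$,
\begin{equation*}
f(\alpha + \lambda(u-\alpha)) \;\leq\; f(\alpha) + \lambda (u-\alpha)^T \nabla f(\alpha) + \tfrac{\lambda^2}{2}\,C_f ,
\end{equation*}
and this is the only regularity property I would use. The second ingredient is the standard \emph{duality gap} bound: by the optimality of $u^{(k)}$ in the linear subproblem \eqref{linear_subpb} and convexity of $f$,
\begin{equation*}
(u^{(k)}-\alpha^{(k)})^T \nabla f(\alpha^{(k)}) \;\leq\; (\alpha^* - \alpha^{(k)})^T \nabla f(\alpha^{(k)}) \;\leq\; f(\alpha^*) - f(\alpha^{(k)}) \;=\; -h_k .
\end{equation*}

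Next I would combine these into a per-iteration recursion. Because the line-search in \eqref{FW_line_search} minimizes $f$ along the feasible segment, we have $f(\alpha^{(k+1)}) \leq f(\alpha^{(k)} + \lambda d^{(k)})$ for \emph{every} admissible $\lambda$, so the curvature bound together with the gap inequality yields
\begin{equation*}
h_{k+1} \;\leq\; (1-\lambda)\, h_k + \tfrac{\lambda^2}{2}\, C_f \qquad \forall\, \lambda \in [0,1] .
\end{equation*}
This is the workhorse inequality: the line-search iterate is at least as good as the iterate produced by the prescribed schedule $\lambda_k = 2/(k+2)$, so it suffices to verify the rate for that schedule.

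Finally I would conclude by induction on $k$. For the base case $k=1$, taking $\lambda = 1$ gives $h_1 \leq C_f/2 \leq 4C_f/3$. For the inductive step, assuming $h_k \leq 4C_f/(k+2)$, plugging $\lambda = 2/(k+2)$ into the recursion produces
\begin{equation*}
h_{k+1} \;\leq\; \Bigl(1 - \tfrac{2}{k+2}\Bigr)\tfrac{4C_f}{k+2} + \tfrac{2 C_f}{(k+2)^2} \;=\; \tfrac{4C_f(k+1)}{(k+2)^2},
\end{equation*}
and the elementary estimate $(k+1)(k+3) \leq (k+2)^2$ gives $h_{k+1} \leq 4C_f/(k+3)$, closing the induction.

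The main obstacle is conceptual rather than computational: one must appreciate that the $C_f$-curvature bound is exactly the right substitute for an $L$-smoothness hypothesis, because it is affine-invariant and only needs to hold along chords of $\Sigma$, which is what makes the constant finite for the feasible sets arising in the Lasso and related problems. Beyond that, the only subtlety is to notice that line-search can only improve upon a fixed step-size schedule, so the clean bound proved for $\lambda_k = 2/(k+2)$ transfers verbatim to the line-search iterate produced by Algorithm~\ref{alg:FW}.
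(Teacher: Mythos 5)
Your proof is correct, and it takes a genuinely different route from the argument the paper relies on. Note that the paper itself does not prove Proposition~\ref{subconv}; it defers to \cite{Jaggi2013ICMLa} and instead proves the stochastic analogue, Proposition~\ref{prop2}, in the Appendix, and that proof specializes to the deterministic case by taking $\cS^{(k)}=\{1,\ldots,p\}$, so it is the natural object of comparison. Both arguments start from the same per-step inequality $h_{k+1}\leq(1-\lambda)h_k+\cO(\lambda^2)\,C_f$ obtained from the curvature bound and the duality-gap estimate, but they extract the rate differently: you substitute the explicit schedule $\lambda_k=2/(k+2)$ and use the fact that exact line search can only improve on any fixed step, which makes the induction self-contained and valid for an arbitrary feasible initialization; the paper instead minimizes the quadratic upper bound exactly at $\bar\lambda=h_k/(2C_f)$, which forces it to first establish the a priori bound $h_k\leq C_f$ (Lemma~\ref{lemma:global_bound}, proved there only under the special initialization $\alpha^{(1)}\in\arg\min_{u\in\mathcal{V}(\Sigma)}f(u)$) so that $\bar\lambda\leq 1$, and then derives the recursion $h_{k+1}\leq h_k-h_k^2/(4C_f)$ of Lemma~\ref{lemma:recurrence_for_induction} before inducting. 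Your version is the more elementary and slightly more general of the two; the paper's version adapts verbatim to the expected-value setting needed for Proposition~\ref{prop2}. Two cosmetic points: your descent lemma carries $\tfrac{\lambda^2}{2}C_f$, which presumes Jaggi's normalization of the curvature constant, whereas the paper's definition (\ref{def:curvature}) omits the factor $2$ and gives $\lambda^2 C_f$; under either convention the induction closes and yields the stated $4C_f/(k+2)$, so nothing breaks. Also, the display in your inductive step should be an inequality rather than an equality, since $(1-\tfrac{2}{k+2})\tfrac{4C_f}{k+2}+\tfrac{2C_f}{(k+2)^2}=\tfrac{(4k+2)C_f}{(k+2)^2}\leq\tfrac{4(k+1)C_f}{(k+2)^2}$; this is harmless.
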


An immediate consequence of Proposition \ref{subconv} is an upper bound on the iteration complexity: given a tolerance $\varepsilon > 0$, the FW algorithms finds an \textit{$\varepsilon$-approximate solution}, i.e. an iterate $\alpha^{(k)}$ such that $f(\alpha^{(k)}) - f(\alpha^{*}) \leq \varepsilon$, after $\cO(1/\varepsilon)$ iterations.
Besides giving an estimate on the total number of iterations which has been shown experimentally to be quite tight in practice \cite{NIPS14,PARTAN_paper}, this fact tells us that the tradeoff between sparsity and accuracy can be partly controlled by appropriately setting the tolerance parameter.
Recently, Garber and Hazan showed that under certain conditions the FW algorithm can obtain a convergence rate of $\cO(1/k^2)$, comparable to that of first-order algorithms such as Nesterov's method \cite{garber2015}. However, their results require strong convexity of the objective function and of the feasible set, a set of hypotheses which is not satisfied for several important ML problems such as the Lasso or the Matrix Recovery problem with trace norm regularization. 

Another possibility is to employ a \textit{Fully-Corrective} variant of the algorithm, where at each step the solution is updated by solving the problem restricted to the currently active vertices. The algorithm described in \cite{geolasso}, where the authors solve the Lasso problem via a nearest point solver based on Wolfe's method, operates with a very similar philosophy. A similar case can be made for the LARS algorithm of \cite{lars_paper}, which however updates the solution in a different way. 
The Fully-Corrective FW also bears a resemblance to the Orthogonal Matching Pursuit algorithms used in the Signal Processing literature \cite{tropp2004}, a similarity which has already been discussed in \cite{clarkson_coresets} and \cite{Jaggi2013ICMLa}. 
However, as mentioned in \cite{clarkson_coresets}, the increase in computational cost is not paid off by a corresponding improvement in terms of complexity bounds. In fact, the work in \cite{lan14} shows that the result in Proposition \ref{subconv} cannot be improved for any first-order method based on solving linear subproblems 
without strengthening the assumptions. Greedy approximation techniques based on both the vanilla and the Fully-Corrective FW have also been proposed in the context of approximate risk minimization with an $\ell_0$ constraint by Shalev-Shwartz \textit{et al.}, who proved several strong runtime bounds for the sparse approximations of arbitrary target solutions \cite{shwartz2010}. 

Finally, it is worth mentioning that the result of Proposition \ref{subconv} can indeed be improved by using variants of FW that employ additional search directions, and allow under suitable hypotheses to obtain a linear convergence rate \cite{SWAP_paper,jaggi_linconv}. 
It should be mentioned, however, that such rates only hold in the vicinity of the solution and that, as shown in \cite{PARTAN_paper}, a large number of iterations might be required to gain substantial advantages. For this reason, we choose not to pursue this strategy in the present paper.


\section{Randomized Frank-Wolfe for Lasso Problems}\label{fw_lasso_sec}

A specialized FW algorithm for problem (\ref{eq:LASSO}) can be obtained straightforwardly by setting $\Sigma$ equal to the $\ell_1$-ball of radius $\delta$, hereafter denoted as $\ball$. In this case, the vertices of the feasible set (i.e., the candidate points among which $u^{(k)}$ is selected in the FW iterations) are $\mathcal{V}(\ball)=\{Ê\pm \delta \be_i \ : \ i= 1,2,\ldots,p \}$, where $\be_i$ is the $i$-th element of the canonical basis. It is easy to see that the linear subproblem (\ref{linear_subpb}) in Algorithm \ref{alg:FW} has a closed-form solution, given by:
\begin{equation}\label{linear_subpb_l1ball}
\begin{aligned}
 u^{(k)} &= - \delta \, \mbox{sign}\left(\nabla f({\alpha}^{(k)})_{i_{\ast}^{(k)}} \right) \be_{i_{\ast}^{(k)}} \equiv \tilde{\delta}^{(k)} \be_{i_{\ast}^{(k)}}, \\
   i_{\ast}^{(k)} &= \argmax_{i=1,\ldots,p} \left| \nabla f({\alpha}^{(k)})_i \right| . 
\end{aligned}
\end{equation}

In order to efficiently execute the iteration, we can exploit the form of the objective function to obtain convenient expressions to update the function value and the gradient after each FW iteration. The gradient of $f(\cdot)$ in (\ref{eq:LASSO}) is 
\begin{equation} \label{eq:gradientLASSO}
\nabla f({\alpha}) = -X^{T}\left(y-X\alpha\right) =  -X^{T}y + X^{T}X\alpha \notag \,.
\end{equation}
There are two possible ways to to compute $\nabla f({\alpha}^{(k)})_i$ efficiently. One is to keep track of the vector of residuals $R^{(k)}=\left(y-X{\alpha}^{(k)}\right) \in \R^m$ and compute $\nabla f({\alpha}^{(k)})_i$ as 
\begin{equation} \label{eq:computation_coordinate1}
\nabla f({\alpha}^{(k)})_i = -z_i^TR^{(k)} = -z_i^Ty + z_i^TX{\alpha}^{(k)} \, ,
\end{equation}
where $z_i \in \R^m$ is the $i$-th column of the design matrix $X$, i.e., the vector formed by collecting the $i$-th attribute of all the training points. We refer to this approach as the ``method of residuals''. The other way is to expand the second line in (\ref{eq:gradientLASSO}) 
\[ 
\nabla f({\alpha}^{(k)})_i = -z_i^Ty + \sum_{j \neq 0} \alpha^{(k)}_j z_i^{T} z_{j} \, ,
\]
and keep track of the inner products $z_i^{T} z_{j}$ between $z_i$ and the predictors $z_j$ corresponding to non-zero coefficients of the current iterate. We call this the ``method of active covariates''. The discussion in the next subsections reveals that the first approach is more convenient if, at each iteration, we only need to access a small subset of the coordinates of $\nabla f({\alpha}^{(k)})$. 
It is clear from (\ref{linear_subpb_l1ball}) that after computing $\nabla f({\alpha}^{(k)})_i$ for $i=1,\ldots,p$ the solution to the linear subproblem in Algorithm \ref{alg:FW} follows easily. The other quantities required by the algorithm are the objective value (in order to monitor convergence) and 
the line search stepsize in (\ref{FW_line_search}), which can be obtained as
\begin{equation} \label{eq:obj_and_line_search}
\begin{aligned}
f(\alpha^{(k)}) &= \frac{1}{2} y^Ty + \frac{1}{2} S^{(k)} - F^{(k)}, \\
 \lambda^{(k)} &= \lambda_\ast := \frac{S^{(k)} - \tilde{\delta} \nabla f({\alpha}^{(k)})_{i\ast} - F^{(k)}}{S^{(k)} - 2 \tilde{\delta} G_{i\ast} +  \tilde{\delta}^2 z_{i\ast}^{T}z_{i\ast}} \, , 
\end{aligned}
\end{equation}
where $i\ast = i_{\ast}^{(k)}$, 
$ G_{i\ast}=\nabla f(\alpha^{(k)})_{i\ast} +  z_{i\ast}^Ty$, and the terms $S^{(k)}, F^{(k)}$ can be updated recursively as 
\begin{equation*}
\begin{aligned}
S^{(k+1)} &= (1-\lambda_\ast)^2 S^{(k)} + 2 \tilde{\delta} \lambda_\ast (1-\lambda_\ast) G_{i\ast}+ \tilde{\delta}^2\lambda_\ast^2 z_{i\ast}^{T}z_{i\ast} \notag\\
F^{(k+1)} &= (1-\lambda_\ast) F^{(k)} + \tilde{\delta}\lambda_\ast z_{i\ast}^{T}y \ , 
\end{aligned}
\end{equation*}
with starting values $S^{(0)}=0$ and $F^{(0)} = 0$. If we store the products $z_{i}^{T}y$ before the execution of the algorithm, the only non-trivial computation required here is $\nabla f({\alpha}^{(k)})_{i\ast}$ which was already computed to solve the subproblem in (\ref{linear_subpb_l1ball}).

\subsection{Randomized Frank-Wolfe Iterations}

Although the FW method is generally very efficient for structured problems with a sparse solution, it also has a number of practical drawbacks. For example, it is well known that the total number of iterations required by a FW algorithm can be large, thus making the optimization prohibitive on very large problems. Even when (\ref{linear_subpb}) has an analytical solution due to the problem structure, the resulting complexity depends on the problem size 
\cite{SWAP_paper}, and can thus be impractical in cases where handling large-scale datasets is required. 
For example, in the specialization of the algorithm to problem (\ref{eq:LASSO}), the main bottleneck is the computation of the FW vertex $i_{\ast}^{(k)}$ in (\ref{linear_subpb_l1ball}) which corresponds to examining all the $p$ candidate predictors and choosing the one most correlated with the current residuals (assuming the design matrix has been standardized s.t. the predictors have unit norm). Coincidentally, this is the same strategy underlying well-known methods for variable selection such as LARS and Forward Stepwise Regression (see Section $1$). 

A simple and effective way to avoid this dependence on $p$ is to compute the FW vertex approximately, by limiting the search to a fixed number of extreme points on the boundary of the feasible set $\Sigma$ \cite{Smola01Learning,NIPS14}. Specialized to the Lasso problem (\ref{eq:LASSO}), this technique can be formalized as extracting a random sample $\mathcal{S} \subseteq \{1,\ldots,p\}$ and solving
\begin{equation}\label{randomized_it_lasso}
u^{(k)} = - \delta \, \mbox{sign}\left(\nabla f({\alpha}^{(k)})_{i^{(k)}_{\cS}} \right) \be_{i^{(k)}_{\cS}}, \qquad \mbox{where }\; i^{(k)}_{\cS} = \argmax_{i\in \cS} \left| \nabla f({\alpha}^{(k)})_i \right| . 
\end{equation}
Formally, one can think of a randomized FW iteration as the optimization of an approximate linear model, built by considering the partial gradient $\nabla f(\alpha^{(k)})_{|\cS^{(k)}}$, i.e. the projection of the gradient onto the subspace identified by the sampled coordinates \cite{stochastic_FW_2014}.
The number of coordinates of the gradient that need to be estimated with this scheme is $|\mathcal{S}|$ instead of $p$. If $|\mathcal{S}| \ll p$, this leads to a significant reduction in terms of computational overhead. Our stochastic specialization of the FW iteration for the Lasso problem takes thus the form of Algorithm \ref{alg:LASSO-FW}. After selecting the variable $z_{i\ast} \in \cS$ best correlated with the current vector of residuals, 
the algorithm updates the current solution along the segment connecting $z_{i\ast} \in \cS$ with $\alpha^{(k)}$. Note how this approach differs from a method like LARS, where the direction to move the last iterate is equiangular to all the active variables. 
\footnote{As shown in \cite{hastie01statisticallearning}, the LARS direction is $d_k = (X_{A_k}^{T}X_{A_k})^{-1}X_{A_k}^T R^{(k)}$ where $X_{A_k}$ is the restriction of the design matrix to the active variables. The FW direction is just $d_k = e_{i\ast} - \alpha^{(k)}$.} 
It also differs from CD, which can make active more than one variable at each ``epoch'' or cycle through the predictors. The algorithm computes the stepsize by looking explicitly to the value of the objective, which can be computed analytically without increasing the cost of the iteration. Finally, the method updates the vector of residuals and proceeds to the next iteration. 

\begin{algorithm}
\begin{algorithmic}[1]
\STATE Choose the sampling set $\cS$ (see Section \ref{sampling_sec}).
\STATE Search for the predictor best correlated with the vector of residuals $R^{(k)} = \left(y-X{\alpha}^{(k)}\right)$:
\begin{equation*}
i_{\ast}^{(k)} =  \argmax_{i\in \cS} \left| \nabla f({\alpha}^{(k)})_i \right| \equiv  \left|z_i^TR^{(k)}\right| \,.
\end{equation*}
\STATE Set $\tilde{\delta}^{(k)}  = - \delta \, \mbox{sign}\left(\nabla f({\alpha}^{(k)})_{i_{\ast}} \right) $.
\STATE Compute the step-size $\lambda^{(k)}$ using (\ref{eq:obj_and_line_search}).
\STATE Update the vector of coefficients as
\[
{\alpha}^{(k+1)}= (1-\lambda^{(k)}){\alpha}^{(k)} + \tilde{\delta} \lambda^{(k)} \be_{i_{\ast}^{(k)}} \, .
\]
\STATE Update the vector of residuals $R^{(k)}$
\begin{equation}\label{eq:updating-residuals}
R^{(k+1)}= (1-\lambda^{(k)})R^{(k)} + \lambda^{(k)}\left(y - \tilde{\delta}z_{i_{\ast}^{(k)}}\right) \, .
\end{equation}
\end{algorithmic}
\caption{\label{alg:LASSO-FW} {Randomized Frank-Wolfe step for the Lasso problem}}
\end{algorithm}

Note that, although in this work we only discuss the basic Lasso problem, extending the proposed implementation to the more general ElasticNet model of \cite{elasticnet} 
is straightforward. The derivation of the necessary analytical formulae is analogous to the one shown above. Furthermore, an extension of the algorithm to solve $\ell_1$-regularized logistic regression problems, another relevant tool in high-dimensional data analysis, can be easily obtained following the guidelines in \cite{friedman2010}. 

\subsection{Complexity and Implementation Details}

In Algorithm \ref{alg:LASSO-FW}, we compute the coordinates of the gradient using the method of residuals given by equation (\ref{eq:computation_coordinate1}). Due to the randomization, this method becomes very advantageous with respect to the use of the alternative method based on the active covariates, even for very large $p$. Indeed, if we denote by $s$ the cost of performing a dot product between a predictor $z_i$ and another vector in $\mathbb{R}^m$, the overall cost of picking out the FW vertex in step $1$ of our algorithm is $\cO(s|\mathcal{S}|)$. Using the method of the active covariates would instead give an overall cost of $\cO(s|\mathcal{S} |\|{\alpha}^{(k)}\|_0)$, which is always worse. Note however that this method may be better than the method of the residuals in a deterministic implementation by using caching tricks as proposed in \cite{friedman2007}, \cite{friedman2010}. For instance, caching the dot products between all the predictors and the active ones and keeping updated all the coordinates of the gradient would cost $\cO(p)$ except when new active variables appear in the solution, in which case the cost becomes $\cO(ps)$. However, this would allow to find the FW vertex in $\cO(p)$ operations. In this scenario, the fixed $\cO(sp)$ cost of the method of residuals may be worse if the Lasso solution is very sparse. It is worth noting that the dot product cost $s$ is proportional to the number of nonzero components in the predictors, which in typical high-dimensional problems is significantly lower than $m$.

In the current implementation, the term $\sigma_i :=z_i^Ty$ will be pre-computed for any $i=1,2,\ldots,p$ before starting the iterations of the algorithm. This allows to write (\ref{eq:computation_coordinate1}) as $-z_i^TR^{(k)} = -\sigma_i + z_i^TX{\alpha}^{(k)}$. Equation (\ref{eq:updating-residuals}) for updating residuals can therefore be replaced by an equation to update $p^{(k)}=X{\alpha}^{(k)}$, eliminating the dependency on $m$. 



\subsection{Relation to SVM Algorithms and Sparsity Certificates}
The previous implementation suggests that the FW algorithm will be particularly suited to the case $p \gg m$ where a regression problem has a very large number of  features but not so many training points. 
It is interesting to compare this scenario to the situation in SVM problems: in the SVM case, the FW vertices correspond to training points, and the standard FW algorithm is able to quickly discover the relevant \quotes{atoms} (the Support Vectors), but has no particular advantage when handling lots of features. In contrast, in Lasso applications, where we are using the $z_i$'s as training points, the situation is somewhat inverted: the algorithm should discover the critical features in at most $\cO(1/\epsilon)$ iterations and guarantee that at most $\cO(1/\epsilon)$ attributes will be used to perform predictions. This is, indeed, the scenario in which Lasso is used for several applications of practical interest, as problems with a very large number of features arise frequently in specific domains like bio-informatics, web and text analysis and sensor networks. 

In the context of SVMs, the randomized FW algorithm has been already discussed in \cite{NIPS14}. However, the results in the mentioned paper were experimental in nature, and did not contain a proof of convergence, which is instead provided in this work. Note that, although we have presented the randomized search for the specific case of problem (\ref{eq:LASSO}), the technique applies more generally to the case where $\Sigma$ is a polytope (or has a separable structure with every block being a polytope, as in \cite{Jaggi2013ICMLb}). We do not feel this hypothesis to be restrictive, as basically every practical application of the FW algorithm proposed in the literature falls indeed into this setting.




\subsection{Convergence Analysis}

We show that the stochastic FW converges (in the sense of expected value) with the same rate as the standard algorithm. First, we need the following technical result.
\vspace{0.4cm}

\begin{lemma}\label{tech_lemma}
   Let $\mathcal{S}$ be picked at random from the set of all equiprobable $\samplesize$-subsets of $\{1,\ldots,p\}$, $1\leq \samplesize
   \leq p$, and let $v$ be any vector in $\mathbb{R}^p$.
Then
\begin{equation*}
  \mathbb{E}\left[\Bigl(\sum_{i\in \mathcal{S}} \mathbf{e}_i \mathbf{e}_i^T \Bigr) v\right] = \frac{\samplesize}{p}\, v\,.
\end{equation*}
\begin{proof}
  Let
  $A_\mathcal{S} = \sum_{i\in \mathcal{S}} \mathbf{e}_i \mathbf{e}_i^T$ and
  $\left(A_\mathcal{S}\right)_{ij}$ an element of $A_\mathcal{S}$. For $i\neq j$,
  $\left(A_\mathcal{S}\right)_{ij} = 0$ and
  $\mathbb{E}\,\bigl[\left(A_\mathcal{S}\right)_{ij}\bigr] = 0.$ For $i= j$,
  $\left(A_\mathcal{S}\right)_{ij}$ is a Bernoulli random variable with
  expectation $\tfrac{\samplesize}{p}$.
  In fact, $\left(A_\mathcal{S}\right)_{ii} = 1$ iff $i\in \mathcal{S}$;
  as there are $\binom{p-1}{\samplesize-1}$ $\samplesize$-subsets of  $\{1,\ldots,p\}$
  containing $i$,
  \[
    \mathbb{P}(i\in \mathcal{S}) = \binom{p-1}{\samplesize-1}\binom{p}{\samplesize}^{-1} = \samplesize/p = \mathbb{P}\bigl(\left(A_\mathcal{S}\right)_{ii} =1\bigr) = \mathbb{E}\,\bigl[\left(A_\mathcal{S}\right)_{ii}\bigr].
    \]
  Therefore, for $i\in\{1,\ldots,p\}$,
\[
    \mathbb{E}\,\bigl[\left(A_\mathcal{S}\right)_{i*} v\bigr] =
    \mathbb{E}\,\Bigl[\sum_{j=1}^p\left(A_\mathcal{S}\right)_{ij} v_j \Bigr]  = \sum_{j=1}^pv_j\,\mathbb{E}\, \bigl[\left(A_\mathcal{S}\right)_{ij} \bigr]= \frac{\samplesize}{p} \,v_i\,.
\]
\end{proof}
\end{lemma}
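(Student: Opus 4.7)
The plan is to reduce the claim to a computation of the expectation of the random diagonal matrix $A_{\mathcal{S}} := \sum_{i \in \mathcal{S}} \mathbf{e}_i \mathbf{e}_i^T$, and then apply linearity. First I would note that by linearity of expectation, $\mathbb{E}[A_{\mathcal{S}} v] = \mathbb{E}[A_{\mathcal{S}}]\, v$, so it suffices to show that $\mathbb{E}[A_{\mathcal{S}}] = (\kappa/p)\, I$.

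Next I would examine $A_{\mathcal{S}}$ entry by entry. Since $\mathbf{e}_i \mathbf{e}_i^T$ is the matrix with a single $1$ in position $(i,i)$ and zeros elsewhere, $A_{\mathcal{S}}$ is diagonal with $(A_{\mathcal{S}})_{ii} = \mathbf{1}[i \in \mathcal{S}]$. Off-diagonal entries are identically zero, so their expectation vanishes. For the diagonal, $(A_{\mathcal{S}})_{ii}$ is a Bernoulli variable taking the value $1$ exactly when the random subset contains $i$.

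The only nontrivial step is a counting argument to evaluate $\mathbb{P}(i \in \mathcal{S})$ under the uniform distribution on $\kappa$-subsets of $\{1,\ldots,p\}$. I would count the subsets of size $\kappa$ that contain the fixed index $i$: these are in bijection with $(\kappa-1)$-subsets of the remaining $p-1$ indices, giving $\binom{p-1}{\kappa-1}$ such subsets out of $\binom{p}{\kappa}$ total. The ratio simplifies to $\kappa/p$, so $\mathbb{E}[(A_{\mathcal{S}})_{ii}] = \kappa/p$ for every $i$.

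Combining these observations yields $\mathbb{E}[A_{\mathcal{S}}] = (\kappa/p)\, I$, from which the lemma follows immediately. I do not anticipate a significant obstacle here; the proof is essentially a tidy bookkeeping exercise, and the only place where one might slip is in the combinatorial identity for $\mathbb{P}(i \in \mathcal{S})$, which a symmetry argument (every index plays the same role, and the expected size of $\mathcal{S}$ is $\kappa$) could alternatively justify without any binomial coefficients.
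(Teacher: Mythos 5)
Your proposal is correct and follows essentially the same route as the paper's own proof: both reduce to showing that the off-diagonal entries of $A_\mathcal{S}$ vanish and that each diagonal entry is a Bernoulli variable with success probability $\binom{p-1}{\samplesize-1}\binom{p}{\samplesize}^{-1} = \samplesize/p$, then conclude by linearity of expectation. The only cosmetic difference is that you phrase the conclusion as $\mathbb{E}[A_\mathcal{S}] = (\samplesize/p)I$ before multiplying by $v$, whereas the paper computes $\mathbb{E}\bigl[(A_\mathcal{S})_{i*}v\bigr]$ row by row.
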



Note that selecting a random subset $\mathcal{S}$ of size $\samplesize$ to solve (\ref{randomized_it_lasso}) is equivalent to (i) building a random matrix $A_\mathcal{S}$ as in Lemma \ref{tech_lemma}, (ii) computing the restricted gradient $\tilde{\nabla}_{\cS} f =  \frac{p}{\samplesize} A_\mathcal{S} \nabla f({\alpha}^{(k)})$ and then (iii) solving the linear sub-problem (\ref{linear_subpb_l1ball}) substituting $\nabla f({\alpha}^{(k)})$ by $\tilde{\nabla}_{\cS} f$. In other words, the proposed randomization can be viewed as approximating $\nabla f({\alpha}^{(k)})$ by $\tilde{\nabla}_{\cS} f$. Lemma \ref{tech_lemma} implies that $\mathbb{E}[\tilde{\nabla}_{\cS} f] =  \nabla f({\alpha}^{(k)})$, which is the key to prove our main result. 

\begin{proposition}\label{prop2}
Let $\alpha^{*}$ be an optimal solution of problem (\ref{eq:GENERICproblem}). Then, for any $k \geq 1$, the iterates of Algorithm \ref{alg:FW} with the randomized search rule satisfy 
\begin{equation*}
\mathbb{E}_{\cS^{(k)}}[f(\alpha^{(k)})] - f(\alpha^{*}) \leq \frac{4\tilde C_f}{k+2}\,,
\end{equation*}
where $\mathbb{E}_{\cS^{(k)}}$ denotes the expectation with respect to the $k$-th random sampling.
\end{proposition}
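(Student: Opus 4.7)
My plan is to adapt the proof of Proposition~\ref{subconv} to the stochastic setting, using Lemma~\ref{tech_lemma} to replace the deterministic FW optimality condition with an expected-value analogue. As the first step I would write down the per-iteration descent inequality: by the line-search optimality of $\lambda^{(k)}$ and the definition of the curvature constant $C_f$, for every $\lambda\in[0,1]$ the standard curvature bound
\begin{equation*}
f(\alpha^{(k+1)})\;\le\; f(\alpha^{(k)})+\lambda\,\bigl\langle u^{(k)}-\alpha^{(k)},\nabla f(\alpha^{(k)})\bigr\rangle+\tfrac{\lambda^2}{2}C_f
\end{equation*}
still holds, where $u^{(k)}$ is now the vertex returned by the randomized rule~(\ref{randomized_it_lasso}). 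As pointed out in the excerpt, $u^{(k)}$ coincides with the exact minimizer of $\langle u,\tilde\nabla_\cS f\rangle$ over the whole $\ell_1$-ball (the off-support coordinates of $\tilde\nabla_\cS f$ contribute zero to the linear objective), so the FW optimality inequality $\langle u^{(k)}-\alpha^*,\tilde\nabla_\cS f\rangle\le 0$ holds pointwise in the random sample $\cS^{(k)}$.

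Next I would take conditional expectation over $\cS^{(k)}$ and invoke Lemma~\ref{tech_lemma}, which gives $\mathbb{E}_{\cS^{(k)}}[\tilde\nabla_\cS f\mid \alpha^{(k)}]=\nabla f(\alpha^{(k)})$. Combined with the pointwise FW inequality above and the convexity of $f$, this yields
\begin{equation*}
\mathbb{E}_{\cS^{(k)}}\bigl[\langle u^{(k)}-\alpha^{(k)},\tilde\nabla_\cS f\rangle\bigm|\alpha^{(k)}\bigr]\;\le\;\langle\alpha^*-\alpha^{(k)},\nabla f(\alpha^{(k)})\rangle\;\le\; f(\alpha^*)-f(\alpha^{(k)}),
\end{equation*}
which is the expected analogue of the standard FW duality inequality.

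The hard part will be reconciling this expected-gap bound, phrased in terms of $\tilde\nabla_\cS f$, with the curvature inequality, phrased in terms of $\nabla f$: because $u^{(k)}$ is supported on $\cS$, $\langle u^{(k)},\tilde\nabla_\cS f\rangle=(p/\samplesize)\langle u^{(k)},\nabla f\rangle$, and the resulting scaling factor is what forces the introduction of the modified constant~$\tilde C_f$. I would define $\tilde C_f$ as a worst-case curvature-type constant adapted to $\tilde\nabla_\cS f$ rather than $\nabla f$, so that an analogue of the curvature bound holds with $\tilde\nabla_\cS f$ on the right-hand side and $\tilde C_f$ in place of $C_f$. Combining this modified curvature bound with the expected-gap estimate gives the recursion
\begin{equation*}
\mathbb{E}[h_{k+1}]\;\le\;(1-\lambda)\,\mathbb{E}[h_k]+\tfrac{\lambda^2}{2}\tilde C_f,\qquad h_j:=f(\alpha^{(j)})-f(\alpha^*),
\end{equation*}
after which the proof closes exactly as in Proposition~\ref{subconv}: taking the a priori schedule $\lambda=2/(k+2)$, a straightforward induction on $k$ yields $\mathbb{E}[h_k]\le 4\tilde C_f/(k+2)$.
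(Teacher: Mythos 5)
Your overall skeleton matches the paper's: a curvature inequality for the step $\alpha^{(k)}+\lambda(u^{(k)}-\alpha^{(k)})$, the unbiasedness $\mathbb{E}_{\cS^{(k)}}[\tilde\nabla_{\cS}f]=\nabla f$ from Lemma~\ref{tech_lemma} to convert the randomized linear minimization into an expected duality-gap bound, and an induction giving the $O(1/k)$ rate. Your expected-gap step is sound and is essentially the paper's: pointwise $\langle u^{(k)}-\alpha^{(k)},\tilde\nabla_{\cS}f\rangle\le\langle \alpha^*-\alpha^{(k)},\tilde\nabla_{\cS}f\rangle$, then expectation plus Lemma~\ref{tech_lemma} gives the bound by $f(\alpha^*)-f(\alpha^{(k)})$ (the paper routes this through $\mathbb{E}[\min_u(\cdot)]\le\min_u\mathbb{E}[(\cdot)]=-g(\alpha^{(k)})\le-h(\alpha^{(k)})$). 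The way you close the induction also differs only cosmetically: you use the a priori schedule $\lambda=2/(k+2)$, while the paper keeps the exact line search, proves $h_{k+1}\le h_k-h_k^2/(4C_f)$, and needs an extra lemma ($h_k\le C_f$ from the vertex initialization) to optimize over $\lambda$; either variant works once the per-step recursion is in hand.

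The genuine gap is in the step you yourself flag as ``the hard part.'' You correctly observe that $\langle u^{(k)},\tilde\nabla_{\cS}f\rangle=(p/\samplesize)\langle u^{(k)},\nabla f\rangle$ because $u^{(k)}$ is supported on $\cS$, so the quantity you control in expectation involves $\tilde\nabla_{\cS}f$ while the curvature bound involves $\nabla f$. But the fix you propose --- absorbing the mismatch into a modified curvature constant $\tilde C_f$ adapted to $\tilde\nabla_{\cS}f$ --- cannot work as stated: the discrepancy $\lambda\langle u^{(k)}-\alpha^{(k)},\nabla f-\tilde\nabla_{\cS}f\rangle$ is \emph{first order} in $\lambda$, so any constant of the form $\sup_{\lambda}\frac{1}{\lambda^{2}}\bigl(f(y)-f(x)-\lambda\langle s-x,\tilde\nabla_{\cS}f(x)\rangle\bigr)$ diverges as $\lambda\to 0$ whenever the linear residual has the wrong sign; no finite deterministic $\tilde C_f$ of that type exists, and a quadratic-in-$\lambda$ term can never hide a linear-in-$\lambda$ error. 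The paper does not introduce a modified constant at all: in the proof of its Lemma~\ref{main_lemma} it identifies $\mathbb{E}_{\cS^{(k)}}[(u^{(k)}-\alpha^{(k)})^T\nabla f(\alpha^{(k)})]$ with $\mathbb{E}_{\cS^{(k)}}[\min_{u\in\Sigma}(u-\alpha^{(k)})^T\tilde\nabla_{\cS^{(k)}}f(\alpha^{(k)})]$ and then applies $\mathbb{E}[\min]\le\min\mathbb{E}$ together with Lemma~\ref{tech_lemma}, keeping the ordinary $C_f$ throughout the appendix (the $\tilde C_f$ of the statement is never defined separately). You have in fact put your finger on the single delicate point of the whole argument --- the correlation between the sampled vertex $u^{(k)}$ and the sample $\cS^{(k)}$ --- but your proposed resolution does not close it, so the proof as written is incomplete at exactly the step where all the stochasticity has to be paid for.
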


This result has a similar flavor to that in \cite{Jaggi2013ICMLb}, and the analysis is similar to the one presented in \cite{stochastic_FW_2014}. However, in contrast to the above works, we do not assume any structure in the optimization problem or in the sampling. A detailed proof can be found in the Appendix.
As in the deterministic case, Proposition \ref{prop2} implies a complexity bound of $\cO(1/\varepsilon)$ iterations to reach an approximate solution $\alpha^{(k)}$ such that $\mathbb{E}_{\cS^{(k)}}[f(\alpha^{(k)})] - f(\alpha^{*}) \leq \varepsilon$.

\subsection{Choosing the Sampling Size}\label{sampling_sec}
When using a randomized FW iteration it is important to choose the sampling size in a sensible way. Indeed, some recent works showed how this choice entails a tradeoff between accuracy (in the sense of premature stopping) and complexity, and henceforth CPU time \cite{NIPS14}.
This kind of approximation is motivated by the following result, which suggests that it is reasonable to 
pick $|\mathcal{S}| \ll p$.

\vspace{0.1cm}
\begin{theorem}[\cite{Smola01Learning}, Theorem 6.33]\label{sampling_thm}
Let $\mathcal{D} \subset \mathbb{R}$ s.t. $|\mathcal{D}| = p$ 
and let $\mathcal{D}^{\prime} \subset \mathcal{D}$ be a random subset
of size $\samplesize$. Then, the probability that the largest element in $\mathcal{D}^{\prime}$ is greater than or equal to $\tilde p$ elements of $\mathcal{D}$ is at least $1-(\frac{\tilde p}{p})^{\samplesize}$.
\end{theorem}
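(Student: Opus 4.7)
The plan is to work directly with the complement event, reduce it to a hypergeometric probability, and then control that probability by a clean product bound.

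First I would order the elements of $\mathcal{D}$ as $d_{(1)} \le d_{(2)} \le \dots \le d_{(p)}$ (breaking ties arbitrarily but consistently, so that every element of $\mathcal{D}$ is assigned a unique rank in $\{1,\dots,p\}$). With this ordering, the event
\[
E = \bigl\{\,\max \mathcal{D}' \text{ is greater than or equal to at least } \tilde p \text{ elements of } \mathcal{D}\,\bigr\}
\]
is exactly the event that $\max \mathcal{D}'$ has rank at least $\tilde p$, which in turn holds iff $\mathcal{D}'$ contains at least one element from $\{d_{(\tilde p)}, d_{(\tilde p+1)}, \dots, d_{(p)}\}$. The complement $E^c$ is therefore the event $\mathcal{D}' \subseteq \{d_{(1)}, \dots, d_{(\tilde p - 1)}\}$, i.e.\ the entire random subset falls in the bottom $\tilde p - 1$ ranks.

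Next, since $\mathcal{D}'$ is chosen uniformly among the $\kappa$-subsets of $\mathcal{D}$, the probability of $E^c$ is a straightforward hypergeometric count:
\[
\mathbb{P}(E^c) \;=\; \frac{\binom{\tilde p - 1}{\kappa}}{\binom{p}{\kappa}}
\]
(with the convention that the numerator is $0$ when $\kappa > \tilde p - 1$, in which case the bound is trivial). Expanding both binomials as falling products gives
\[
\mathbb{P}(E^c) \;=\; \prod_{i=0}^{\kappa-1} \frac{\tilde p - 1 - i}{p - i}.
\]

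The key inequality is that each factor satisfies $(\tilde p - 1 - i)/(p - i) \le (\tilde p - 1)/p \le \tilde p / p$, because the map $i \mapsto (a-i)/(b-i)$ is decreasing in $i$ when $a \le b$. Multiplying $\kappa$ such factors yields $\mathbb{P}(E^c) \le (\tilde p / p)^{\kappa}$, and hence $\mathbb{P}(E) \ge 1 - (\tilde p / p)^{\kappa}$, which is the claim.

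The only real subtlety is handling possible ties among the $d_i$'s cleanly, which is why I would fix a deterministic tie-breaking rule up front so that ``rank $\ge \tilde p$'' is unambiguous; once that is set, everything else is a direct counting argument with an elementary product bound, so I do not expect any serious obstacle.
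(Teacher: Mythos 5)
Your proof is correct. Note that the paper itself gives no proof of this statement: it is imported verbatim as Theorem~6.33 of the cited reference and used as a black box to justify the choice of sampling size, so there is no in-paper argument to compare against. Your argument is the natural self-contained one and it is sound: since $\mathcal{D}\subset\mathbb{R}$ is a \emph{set} with $|\mathcal{D}|=p$, its elements are distinct, the ranks are unambiguous (your tie-breaking caveat is unnecessary but harmless), the complement event is exactly $\mathcal{D}'\subseteq\{d_{(1)},\dots,d_{(\tilde p-1)}\}$, and the hypergeometric ratio
\[
\frac{\binom{\tilde p-1}{\kappa}}{\binom{p}{\kappa}}
=\prod_{i=0}^{\kappa-1}\frac{\tilde p-1-i}{p-i}
\le\Bigl(\frac{\tilde p-1}{p}\Bigr)^{\kappa}
\le\Bigl(\frac{\tilde p}{p}\Bigr)^{\kappa}
\]
follows because each factor is maximized at $i=0$, exactly as you argue via the monotonicity of $i\mapsto(a-i)/(b-i)$ for $a\le b$. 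One could shortcut that step by observing that drawing $\kappa$ elements without replacement entirely from the bottom $\tilde p-1$ ranks is no more likely than the with-replacement analogue, whose probability is $((\tilde p-1)/p)^{\kappa}$, but your factor-by-factor bound is equally clean. The only cosmetic remark is that your assertion that the event $E$ \emph{equals} ``the rank of $\max\mathcal{D}'$ is at least $\tilde p$'' relies on distinctness of the elements; with repeated values the rank event would only be contained in $E$, which still yields the desired lower bound, so nothing breaks either way.
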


The value of this result lies in the ability to obtain probabilistic bounds for the quality of the sampling \emph{independently of the problem size $p$}.
For example, in the case of the Lasso problem, where $\mathcal{D} = \{|\nabla f(\alpha^{(k)})_1|,\ldots,|\nabla f(\alpha^{(k)})_p|\}$ and $\mathcal{D}^{\prime} = \{ |\nabla f(\alpha^{(k)})_i| \mbox{ s.t. } i \in \mathcal{S} \}$, it is easy to see that it suffices to take $|\mathcal{S}| \approx 194$ to guarantee that, with probability at least $0.98$, $|\nabla f(\alpha^{(k)})_{i^{(k)}_\mathcal{S}}|$ lies between the $2\%$ largest gradient components (in absolute value), independently of $p$. This kind of sampling has been discussed for example in \cite{PARTAN_paper}. 

The issue with this strategy is that, for problems with very sparse solutions (which is the case for strong levels of regularization), even a large confidence interval does not guarantee that the algorithm can sample a good vertex in most of the iterations. Intuitively, the sampling strategy should allow the algorithm to detect the set of vertices active at the optimum, which correspond, at various stages of the optimization process, to descent directions for the objective function. In sparse approximation problems, extracting a sampling set without relevant features may imply adding \quotes{spurious} components to the solution, reducing the sparseness of the model we want to find. 

A more effective strategy in this context would be ask for a certain probability that the sampling will include at least one of the \quotes{optimal} features. Letting $\activeset$ be the index set of the active vertices at the optimum, and denoting $\activesize=|\activeset|$ and $\samplesize=|\randomset|$, we have
\begin{equation}\label{sampling_ineq}
P(\activeset \cap \randomset = \emptyset) =  \prod_{j=0}^{\samplesize-1} \left(1-\frac{\activesize}{p-j}\right) \leq  \left(1-\frac{\activesize}{p}\right)^{\samplesize},
\end{equation}
with the latter inequality being a reasonable approximation if $\samplesize \ll p$. From (\ref{sampling_ineq}), we can guarantee that $\activeset \cap \randomset \neq \emptyset$ with probability at least $\rho$ by imposing:
\begin{equation}\label{sampling_rule}
\left(1-\frac{\activesize}{p}\right)^{\samplesize} \leq  (1 - \rho)  \ \Leftrightarrow  \ \samplesize \geq \frac{\ln(1 - \rho)}{ \ln\left(1-\frac{\activesize}{p}\right)} = \frac{\ln(\mbox{\small confidence})}{\ln(\mbox{\small sparseness})} \,.
\end{equation}

On the practical side, this sampling strategy often implies taking a larger $\kappa$. Assuming that the fraction of relevant features ($s/p$) is constant, we essentially get the bound for $\kappa$ provided by Theorem \ref{sampling_thm}, which is independent of $p$. However, for the worst case $s/p \to 0$, we get
\begin{equation}\label{sampling_rule}
\frac{\ln(1 - \rho)}{ \ln\left(1-\frac{\activesize}{p}\right)} \approx \left(\frac{-\ln(1 - \rho)}{s}\right) \ p \, ,
\end{equation}
which suggests to use a sampling size proportional to $p$. A more involved strategy, which exploits the incremental structure of the FW algorithm, is using a large $\kappa$ at early iterations and smaller values of $\kappa$ as the solution gets more dense. If the optimal solution is very sparse, the algorithm requires few expensive iterations to converge. In contrast, if the solution is dense, the algorithm requires more, but faster, iterations (e.g. for a confidence $1-\rho = 0.98$ and $s/p=0.02$ the already mentioned $\kappa=194$ suffices). For the problems considered in this paper, setting $\kappa$ to a small fraction of $p$ works well in practice, as shown by the experiments in the next Section.


\section{Numerical Experiments}\label{experiments}
In this section, we assess the practical effectiveness of the randomized FW algorithm by performing experiments on both synthetic datasets and real-world benchmark problems with hundreds of thousands or even millions of features.  
The characteristics of the datasets are summarized in Table \ref{TabDatasets}, where we denote by $m$ the number of training examples, by $t$ the number of test examples, and by $p$ the number of features. The synthetic datasets were generated with the Scikit-learn function \texttt{make\_regression} \cite{scikit-learn}. Each of them comes in two 
versions corresponding to a different number of relevant features in the true linear model used to generate the data (32 and 100 features for the problem of size $p = 10000$, and 158 and 500 features for that of size $p = 50000$).
The real large-scale regression problems \textbf{E2006-tfidf} and \textbf{E2006-log1p}, and the datasets \textbf{Pyrim} and \textbf{Triazines}, are available from \cite{SVMLIB}. 


\begin{table}
\centering
\begin{tabular}{@{}lrrr}
\hline
{Dataset} \T& ${m}$ & $t$ & $p$  \B\\
\hline
\T {\textbf{Synthetic-10000}}     & $200$ & $200$   & $10,000$   \\
{\textbf{Synthetic-50000}}  & $200$  & $200$   & $50,000$     \\
{\textbf{Pyrim}}   & $74$ & $--$   & $201,376$  \\
{\textbf{Triazines}}   & $186$ & $--$   & $635,376$  \\
{\textbf{E2006-tfidf}}   & $16,087$ & $3,308$   & $150,360$  \\
{\textbf{E2006-log1p}}\B    & $16,087$  & $3,308$ & $4,272,227$ \\
\hline
\end{tabular}
\caption{\small \label{TabDatasets} List of the benchmark datasets used in the experiments.}
\end{table}


In assessing the performance of our method, we used as a baseline the following algorithms, which in our view, and according to the properties summarized in Table \ref{methods_comparison}, can be considered among the most competitive solvers for Lasso problems:
\begin{itemize}
\item The well-known CD algorithm by Friedman \textit{et al.}, as implemented in the Glmnet package \cite{friedman2010}. This method is highly optimized for the Lasso and is considered a state of the art solver in this context.
\item The SCD algorithm as described in \cite{shwartz2011}, which is significant both for being a stochastic method and for having better theoretical guarantees than the standard cyclic CD.
\item The Accelerated Gradient Descent with projections for both the regularized and the constrained Lasso, as this algorithm guarantees an optimal complexity bound. We choose as a reference the implementation in the SLEP package by Liu \textit{et al.} \cite{SLEP}.
\end{itemize}
Among other possible first-order methods, the classical SGD suffers from a worse convergence rate, and its variant SMIDAS has a complexity bound which depends on $p$, thus we did not include them in our experiments. Indeed, the authors of \cite{shwartz2010} conclude that SCD is both faster and produces significantly sparser models compared to SGD. Finally, although the recent GeoLasso algorithm of \cite{geolasso} is interesting because of its close relationship with FW, its running time and memory requirements are clearly higher compared to the above solvers.

\begin{table}[ht!]\label{methods_comparison}
\centering
\begin{footnotesize}
\begin{tabular}{|ccccc|}
\hline
\T \textbf{Approach} & \textbf{Form} & \textbf{Number of} & \textbf{Complexity} & \textbf{Sparse}\\
\B & & \textbf{Iterations} & \textbf{per Iteration} & \textbf{Its.}\\
\hline
\T Accelerated Gradient + Proj. & (\ref{eq:LASSO}) & $\cO(1/\sqrt{\epsilon})$ &  $\cO(mp+p)$${\tiny\dagger_1}$ & No\\
\B \cite{liu2009efficient} & & &  &\\  
\hline
\T Accelerated Gradient + Reg. Proj.  & (\ref{eq:LASSO-unc}) & $\cO(1/\sqrt{\epsilon})$ &  $\cO(mp+p)$$\scriptsize{\dagger_1}$ & No\\
\B \cite{liu2010efficient} & & &  &\\  
\hline
\T Cyclic Coordinate Descent & (\ref{eq:LASSO-unc}) & Unknown & $\cO(mp)$$\dagger_2$ & Yes\\
\B($\mbox{CD}$) \cite{friedman2007,friedman2010} & & & &\\  
\hline
\T Stochastic Gradient Descent & (\ref{eq:LASSO-unc}) & $\cO(1/\epsilon^2)$ & $\cO(p)$ & No\\
\B ($\mbox{SGD}$) \cite{langford2009sparse} & & &  &\\  
\hline
\T Stochastic Mirror Descent & (\ref{eq:LASSO-unc}) & $\cO(\log(p)/\epsilon^2)$ & $\cO(p)$ & No\\
\B \cite{shwartz2011} & & &  &\\  
\hline
\T GeoLasso \cite{geolasso} & (\ref{eq:LASSO}) & $\cO(1/\epsilon)$ & $\cO(mp + a^2)$ & Yes \B\\
\hline
\T Frank-Wolfe ($\mbox{FW}$)  \cite{Jaggi2013ICMLa} & (\ref{eq:LASSO}) & $\cO(1/\epsilon)$ & $\cO(mp)$ & Yes \B\\
\hline
\T Stochastic Coord. Descent ($\mbox{SCD}$) & (\ref{eq:LASSO-unc}) & $\cO(p/\epsilon)$ & $\cO(m)$$\dagger_3$ & Yes\\
\B  \cite{richtarik2014} & & &  &\\  
\hline
\T Stochastic Frank-Wolfe & (\ref{eq:LASSO}) & $\cO(1/\epsilon)$ & $\cO(m|\cS|)$  & Yes \B\\
\hline
\end{tabular}
\end{footnotesize}
\caption{\label{methods_comparison} Methods proposed for scaling the Lasso and their complexities. Here, $a$ denotes the number of active features at a given iteration, which in the worst case is $a=\mbox{rank}(X)\leq \mbox{min}(m,p)$. A method is said to have \emph{sparse iterations} if a non trivial bound for the number of non-zero entries of each iterate holds at any moment. $\dagger_1$ $\cO(p)$ is required for the projections. $\dagger_2$ An iteration of cyclic coordinate descent corresponds to a complete cycle through the features. $\dagger_3$ An iteration of SCD corresponds to the optimization on a single feature. }
\end{table}

Since an appropriate level of regularization needs to be automatically selected in practice, the algorithms are compared by computing the entire regularization path on a range of values of the regularization parameters $\lambda$ and $\delta$ (depending on whether the method solves the penalized or the constrained formulation). 
Specifically, we first estimate two intervals $[\lambda_{\min},\lambda_{\max}]$ and $[\delta_{\min},\delta_{\max}]$, and then solve problems (\ref{eq:LASSO-unc}) and (\ref{eq:LASSO}) on a $100$-point parameter grid in logarithmic scale.
For the penalized Lasso, we use $\lambda_{\min} = \lambda_{\max}/100$, where $\lambda_{\max}$ is determined as in the Glmnet code. 
Then, to make the comparison fair (i.e. to ensure that all the methods solve the same problems according to the equivalence in Section \ref{lasso-sec}), we choose for the constrained Lasso $\delta_{\max} = \|\alpha_{\min}\|_1$ and  $ \delta_{\min} = \delta_{\max}/100$, where $\alpha_{\min}$ is the solution obtained by Glmnet with the smallest regularization parameter $\lambda_{\min}$ and a high precision ($\varepsilon = 10^{-8}$). The idea is to give the solvers the same \quotes{sparsity budget} to find a a solution of the regression problem.

\subsubsection*{Warm-start strategy} As usually done in these cases, and for all the methods, we compute the path using a warm-start strategy where each solution is used as an initial guess for the optimization with the next value of the parameter. Note that we always start from the most sparse solution. This means that in the cases of CD, SCD and regularized SLEP we start from $\lambda_{\max}$ towards $\lambda_{\min}$, while for FW and constrained SLEP we go from $\delta_{\min}$ towards $\delta_{\max}$. Furthermore, since $\delta < \| \alpha^R \|_1$, where $\alpha^R$ is the unconstrained solution, we know that the solution will lie on the boundary, therefore we adopt a heuristic strategy whereby the previous solution is scaled so that its $\ell_1$-norm is $\delta$. Both algorithms are initialized with the null solution as the initial guess. 
Regarding the stopping criterion for each problem in the 
path, we stop the current run when $\| \alpha^{(k+1)}-\alpha^{(k)}\|_{\infty} \leq \varepsilon$ for all the algorithms.
Other choices are possible (for example, FW methods are usually stopped using a duality gap-based criterion \cite{Jaggi2013ICMLa}), but this is the condition used by default to stop CD in Glmnet. A value of $\varepsilon = 10^{-3}$ is used in the following experiments. 
All the considered algorithms have been coded in C++. The code and datasets used in this paper are available from public repositories on Github (\url{https://github.com/efrandi/FW-Lasso}) and Dataverse (\url{https://goo.gl/PTQ05R}), respectively. The SLEP package has a Matlab interface, but the key functions are all coded in C. Overall, we believe our comparison can be considered very fair. 
We executed the experiments on a 3.40GHz Intel i7 machine with 16GB of main memory running CentOS. For the randomized experiments, results were averaged over 10 runs.

\subsection{\quotes{Sanity Check} on the Synthetic Datasets}
The aim of these experiments is not to measure the performance of the algorithms (which will be assessed below on four larger, real-life datasets), but rather to compare their ability to capture the evolution of the most relevant features of the models, and discuss how this relates to their behaviour from an optimization point of view.
To do this, we monitor the values of the 10 most relevant features along the path, as computed by both the CD and FW algorithms, and plot the results in Figures \ref{variables_10000} and \ref{variables_50000}.
To determine the relevant variables, we use as a reference the regularization path obtained by Glmnet with $\varepsilon = 10^{-8}$ (which is assumed to be a reasonable approximation of the exact solution), and compute the 10 variables having, on average, the highest absolute value along the path. As this experiment is intended mainly as a sanity check to verify that our solver reconstructs the solution correctly, we do not include other algorithms in the comparison.
In order to implement the random sampling strategy in the FW algorithm, 
we first calculated the average number of active features along the path, rounded up to the nearest integer, as an empirical estimate of the sparsity level. 
Then, we chose $| \cS |$ based on the probability $\rho$ of capturing at least one of the relevant features at each sampling, according to (\ref{sampling_rule}). A confidence level of $99\%$ was used for this experiment, leading to sampling sizes of 372 and 324 points for the two problems of size 10000, and of 1616 and 1572 points for those of size 50000.

\begin{figure}[h!!!]
\begin{center}
\subfigure[]{
   {\includegraphics[scale = 0.09]{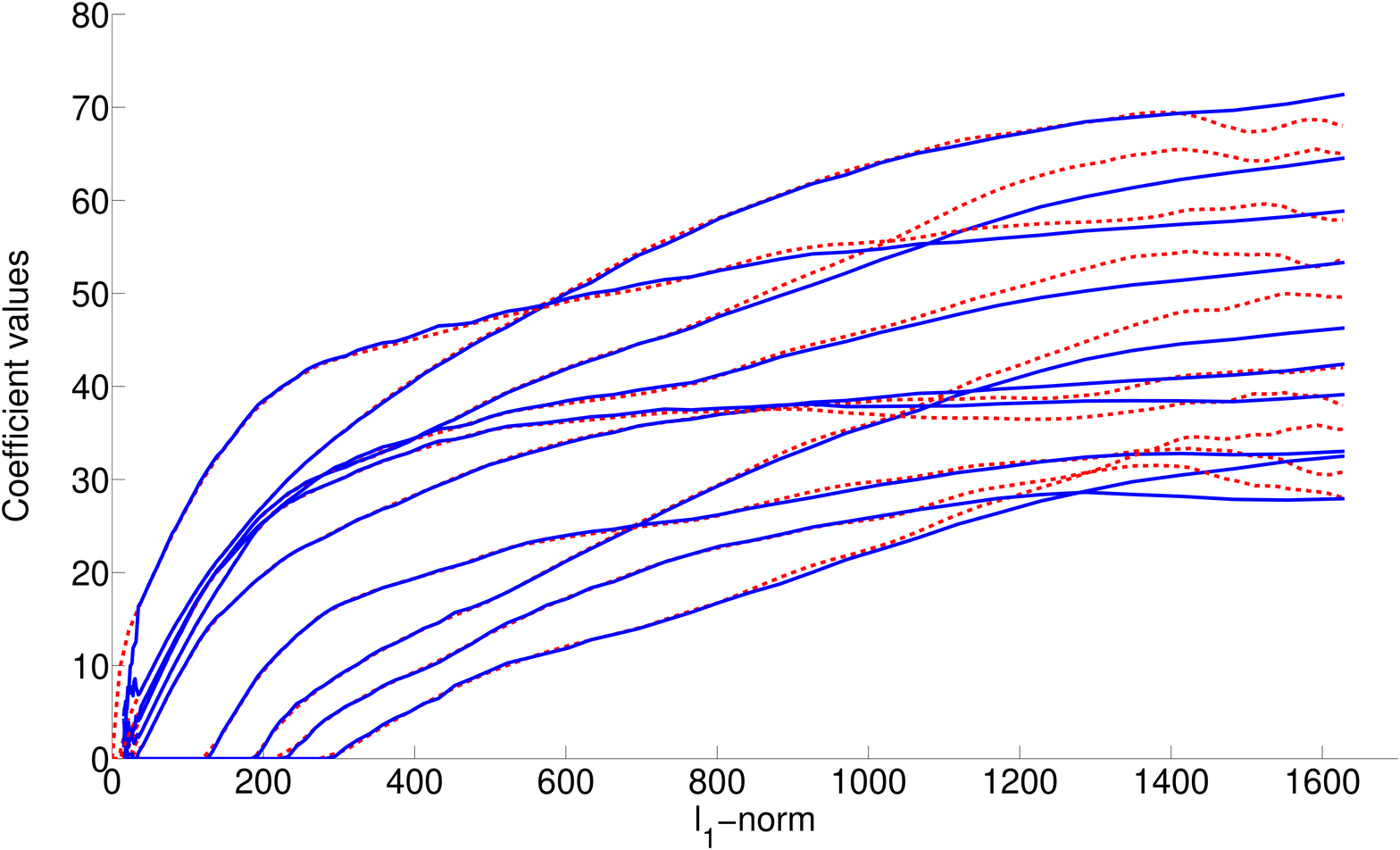}}}
\subfigure[]{
   {\includegraphics[scale = 0.09]{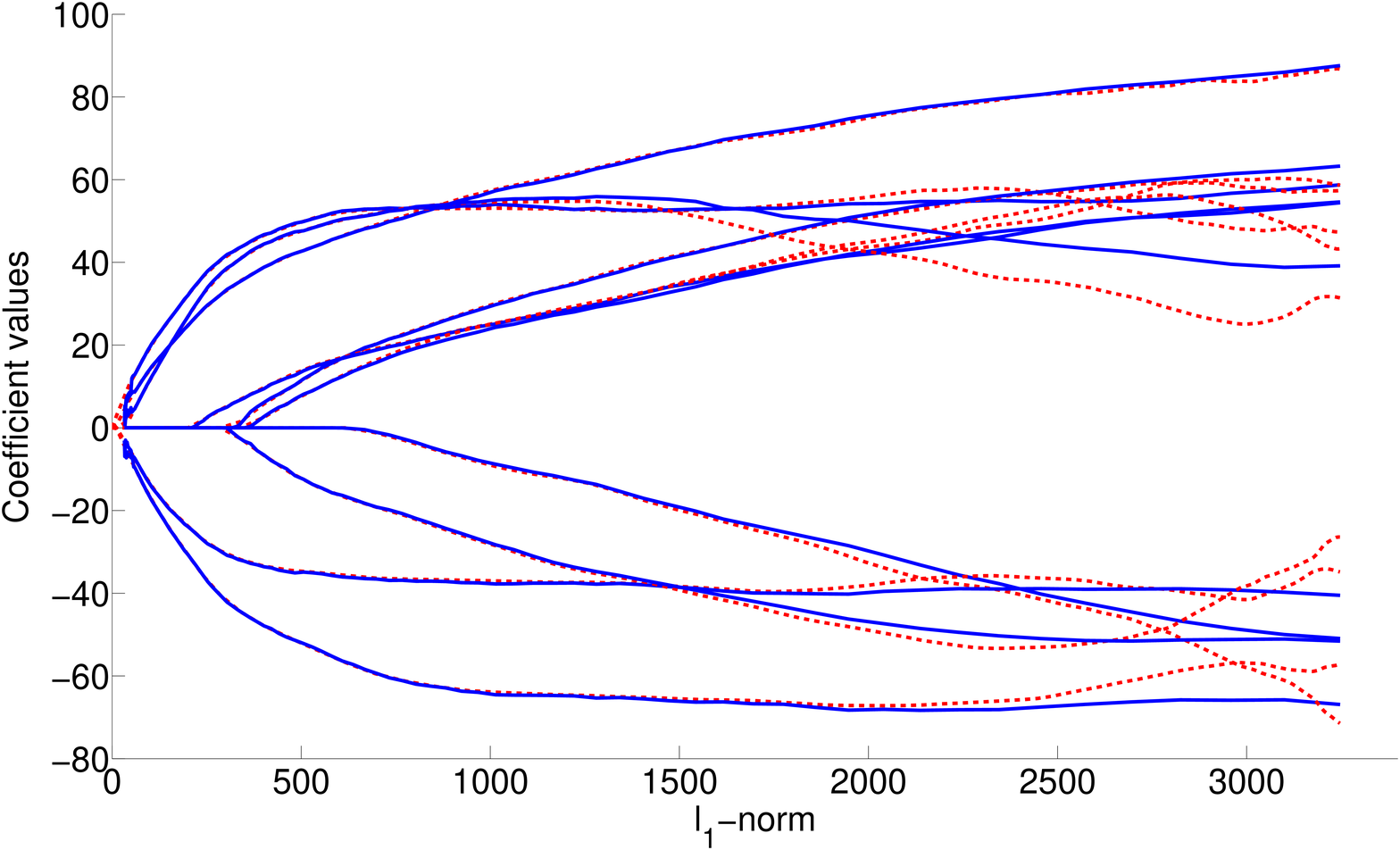}}}
\caption{\label{variables_10000} Growth of the 10 most significant features on the synthetic problem of size {10000}, with 32 (a) and 100 (b) relevant features. Results for CD are in red dashed lines, and in blue continuous lines for FW.} 
\end{center}
\end{figure} 

\begin{figure}[h!!!]
\begin{center}
\subfigure[]{
   {\includegraphics[scale = 0.09]{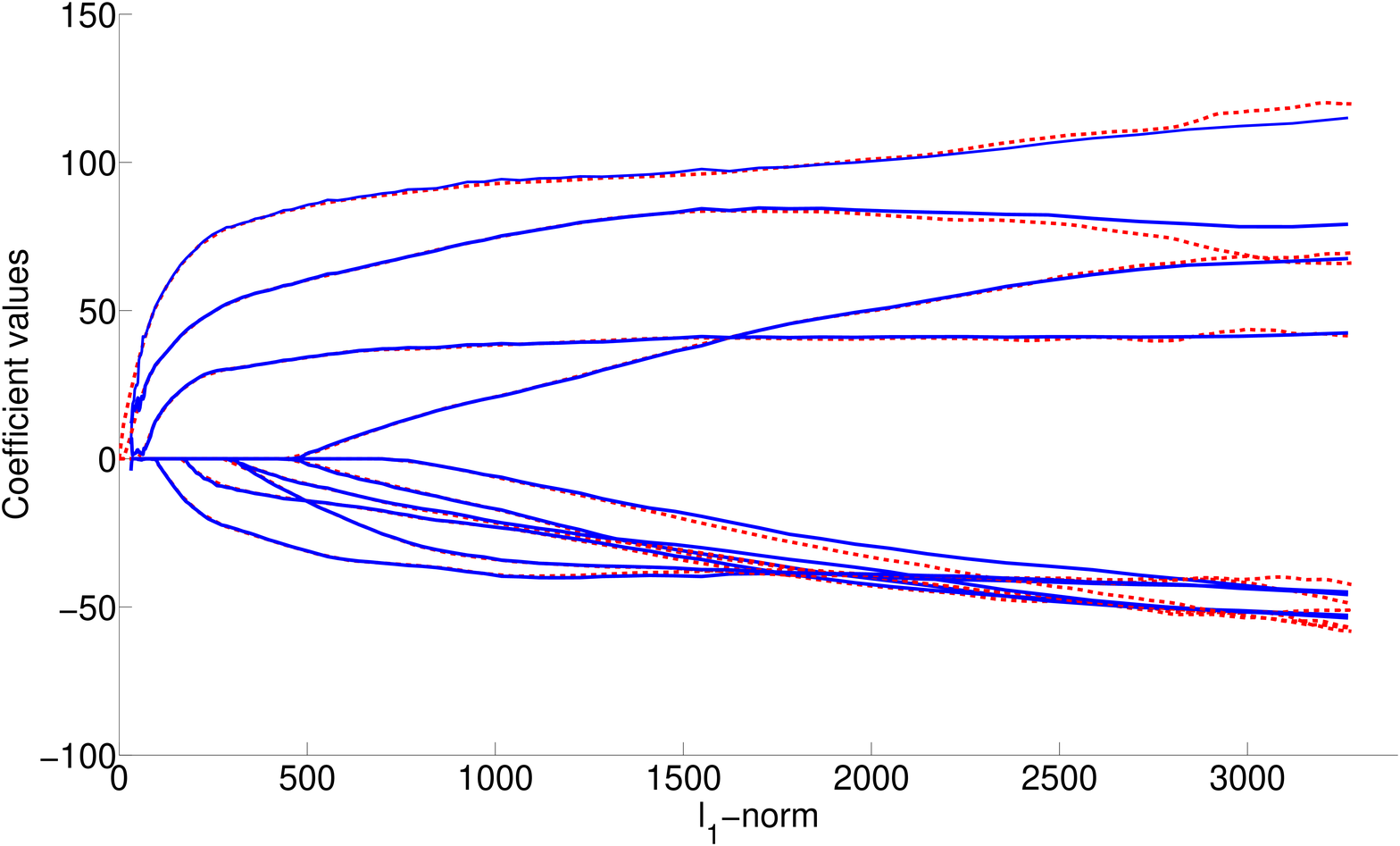}}}
\subfigure[]{
   {\includegraphics[scale = 0.09]{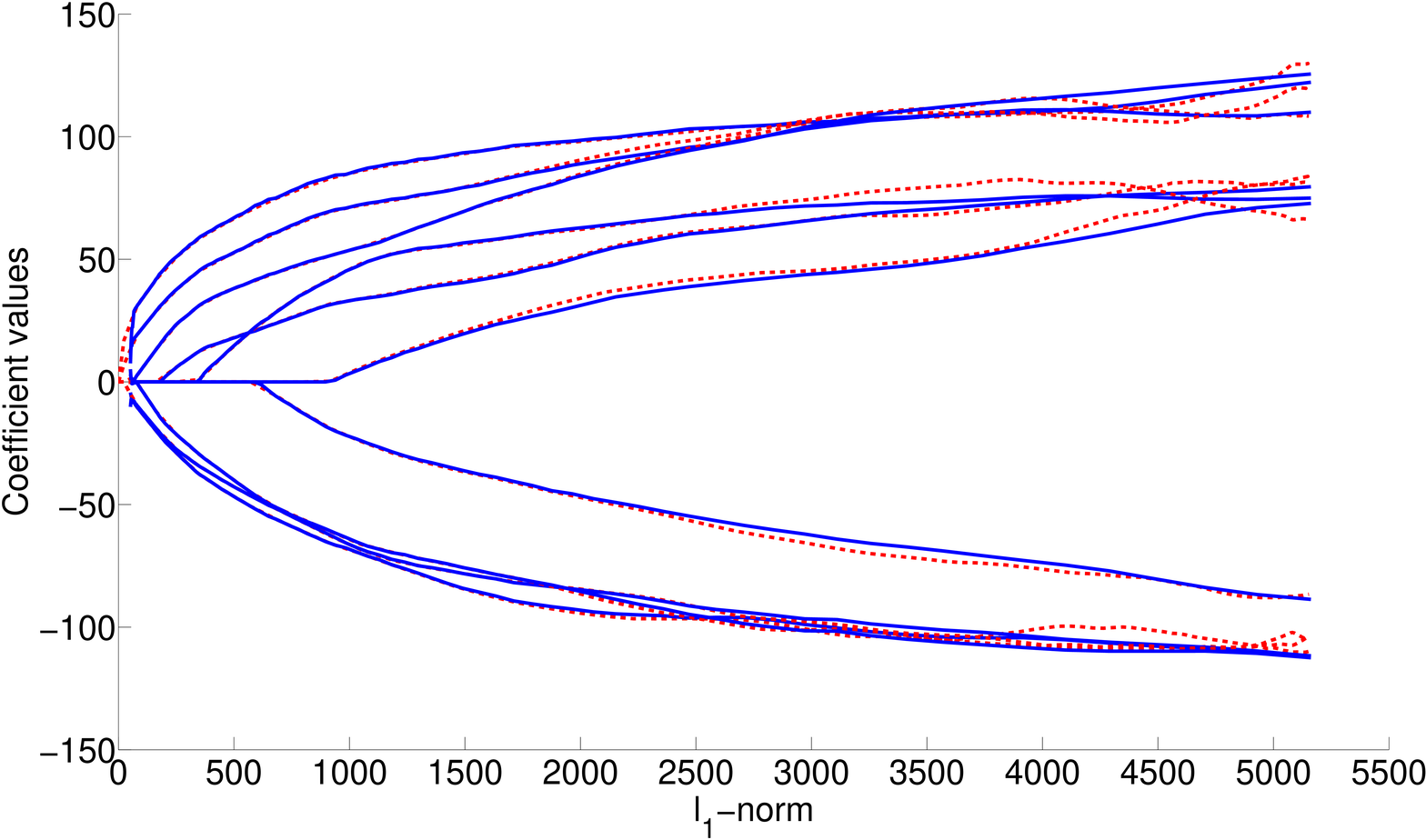}}}
\caption{\label{variables_50000} Growth of the 10 most significant features on the synthetic problems of size {50000}, with 158 (a) and 500 (b) relevant features. Results for CD are in red dashed lines, and in blue continuous lines for FW.} 
\end{center}
\end{figure} 


Figure \ref{synthetic_errors} depicts, for two of the considered problems, the prediction error on the test set along the path found by both algorithms. 
It can be seen that both methods are able to find the same value of the best prediction error (i.e. to correctly identify the best model). The FW algorithm also seems to obtain slightly better results towards the end of the path, consistently with the fact that the coefficients of the most relevant variables tend to be more stable, compared with CD, when the regularization is weaker.

\begin{figure}
\begin{center}
\subfigure[]{
{\includegraphics[scale = 0.09]{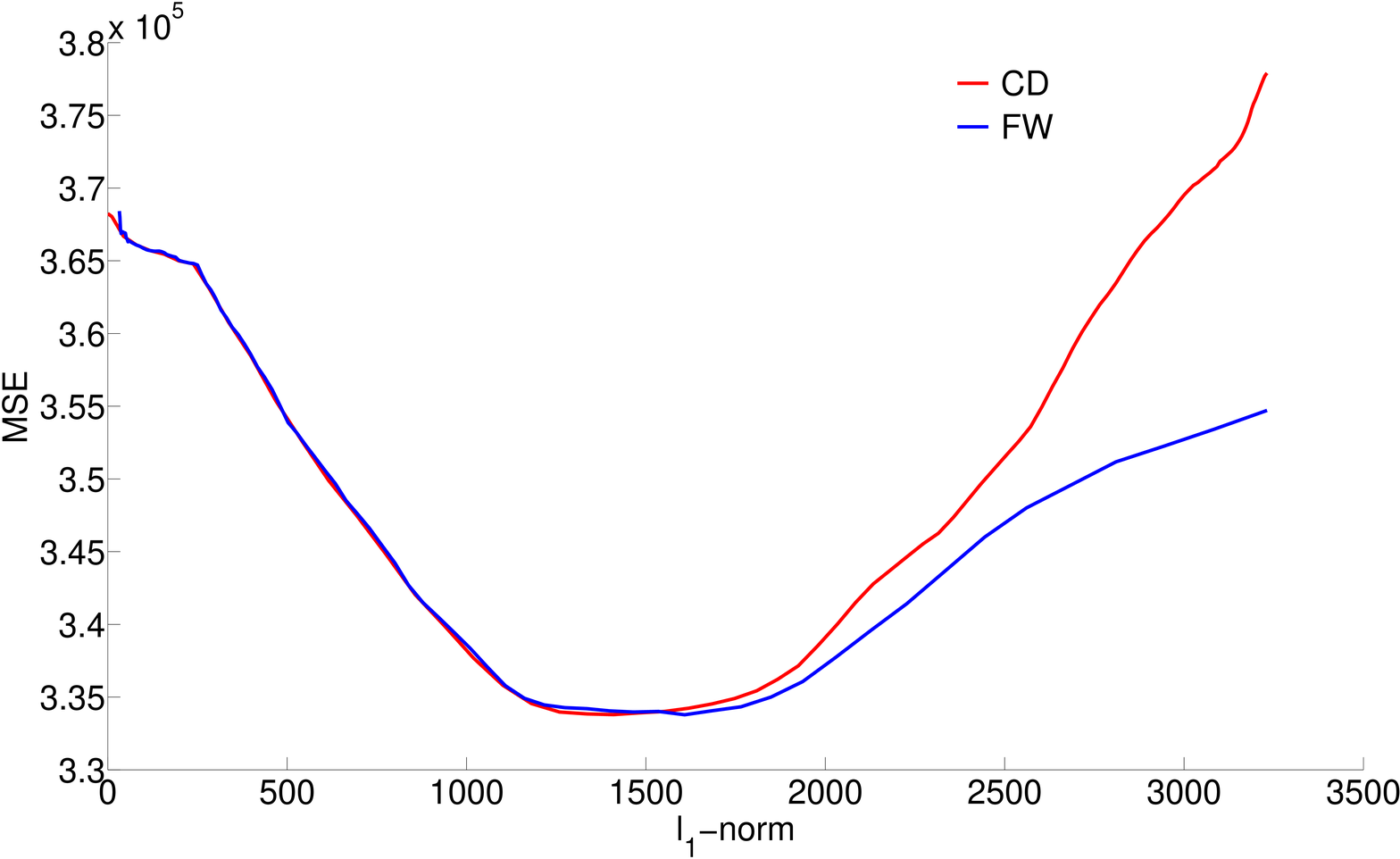}}}
\subfigure[]{
   {\includegraphics[scale = 0.09]{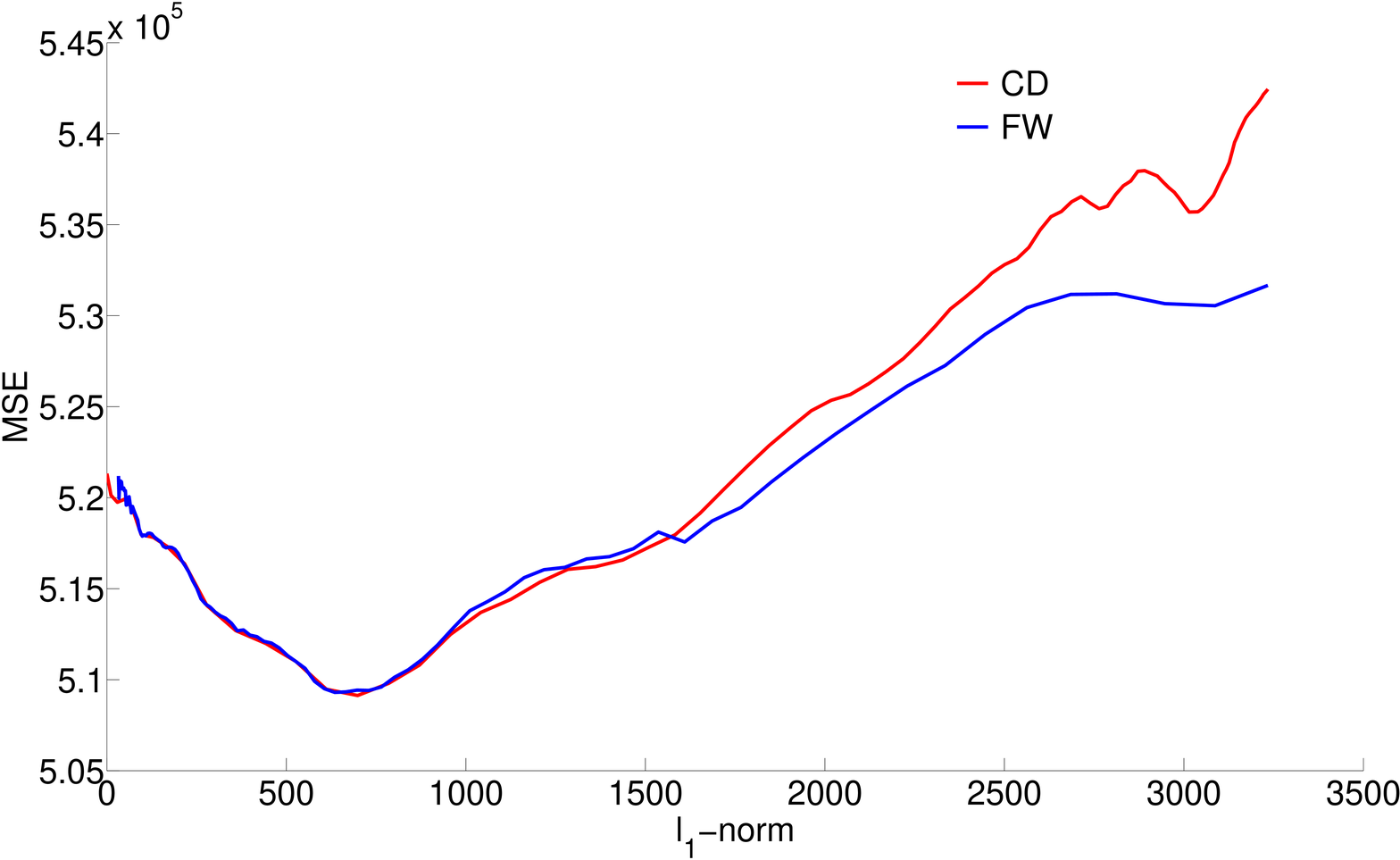}}}
\caption{\label{synthetic_errors} Test error ($\ell_1$-norm vs. MSE) for problems \textbf{Synthetic-10000} (100 relevant features) \textbf{Synthetic-50000} (158 relevant features). Results for CD are in red, and in blue for FW.} 
\end{center}
\end{figure} 


\subsection{Results on Large-scale Datasets}
In this section, we report the results on the problems \textbf{Pyrim}, \textbf{Triazines}, \textbf{E2006-tfidf} and \textbf{E2006-log1p}.
These datasets represent actual large-scale problems of practical interest. 
The \textbf{Pyrim} and \textbf{Triazines} datasets stem from two quantitative structure-activity relationship models (QSAR) problems, where the biological responses of a set of molecules are measured and statistically related to the molecular structure on their surface. We expanded the original feature set by means of product features, i.e. modeling the response variable $y$ as a linear combination of polynomial basis functions, as suggested in \cite{poly_features}. For this experiment, we used product features of order $5$ and $4$ respectively, which leads to large-scale regression problems with $p=201,376$ and $p=635,376$.
Problems \textbf{E2006-tfidf} and \textbf{E2006-log1p} stem instead from the real-life NLP task of predicting the risk of investment (i.e. the stock return volatility) in a company based on available financial reports \cite{E2006-paper}.

To implement the random sampling for the FW algorithm, we cannot use the technique described above for the experiments on the synthetic datasets, as in real problems we do not have an \textit{a-priori} estimate of the sparsity level. We therefore adopt a simpler strategy, and set $|\cS|$ to a fixed, small fraction of the total number of features. 
Our choices are summarized in Table \ref{TabSampling}.


\begin{table}[h!!]
\centering
\begin{tabular}{@{}lrrrr}
\hline
 \T  \% of $p$ & {\textbf{Pyrim}} & {\textbf{Triazines}} & {\textbf{E2006-tfidf}}  & {\textbf{E2006-log1p}} \B \\
\hline
\T {1\%} & 2,014 & 6,354 & 1,504 & 42,723 \\
{2\%} & 4,028 & 12,708 & 3,008 & 85,445 \\
{3\%}\B & 6,042 & 19,062 & 4,511 & 128,167 \\
\hline
\end{tabular}
\caption{\small \label{TabSampling} Sizes of the sampling set $|\cS|$ for the large-scale datasets.}
\end{table}

%
%


As a measure of the performance of the considered algorithms, we report the CPU time in seconds, the total number of iterations and the number of requested dot products (which account for most of the required running time for all the algorithms\footnote{Note that the SLEP code uses highly optimized libraries for matrix multiplication, therefore matrix-vector computations can be faster than naive C++ implementations.}) 
along the entire regularization path. Note that, in assessing the number of iterations, we consider one complete cycle of CD to be roughly equivalent to a full deterministic iteration of FW (since in both cases the complexity is determined by a full pass through all the coordinates) and to $p$ random coordinate explorations in SCD. 
Finally, in order to evaluate the sparsity of the solutions, we report the average number of active features along the path.
Results are displayed in Tables \ref{largescale_CD} and \ref{largescale_FW}. In the second table, the speed-ups with respect to the CD algorithm are also reported. It can be seen how for all the choices of the sampling size the FW algorithm allows for a substantial improvement in computational performance, as confirmed by both the CPU times and the machine-independent number of requested dot products (which are roughly proportional to the running times). The plain SCD algorithm performs somewhat worse than CD, something we attribute mainly to the fact that the Glmnet implementation of CD is a highly optimized one, using a number of \textit{ad-hoc} tricks tailored to the Lasso problem that we decided to preserve in our comparison. If we used a plain implementation of CD, we would expect to obtain results very close to those exhibited by SCD.

Furthermore, FW is always able to find the sparsest solution among the considered methods. The extremely large gap in average sparsity between FW and CD on one side, and the SLEP solvers on the other, is due to the fact that the latter compute in general dense iterates. Although the Accelerated Gradient Descent solver is fast and competitive from an optimization point of view, providing always the lower number of iterations as predicted by the theory, it is not able to keep the solutions sparse along the path. This behavior clearly highlights the advantage of using incremental approximations in the context of sparse recovery and feature selection. Importantly, note that the small number of features found by FW is not a result of the randomization technique: it is robust with respect to the sampling size, and additional experiments performed using a deterministic FW solver revealed that the average number of nonzero entries in the solution does not change even if the randomization is completely removed.

\begin{table}[h!!!]
\centering
\begin{footnotesize}
{\begin{tabular}{p{2.5cm}p{1.7cm}p{1.7cm}p{1.7cm}p{1.7cm}}
\hline
\T\B  & CD & SCD & SLEP Reg. & SLEP Const.  \\
\hline
\textbf{Pyrim} \T   & & & &  \\
	 Time (s)  & $6.22\e+00$  & $1.59\e+01$  & $5.43\e+00$ & $6.86\e+00$ \\
  	 Iterations &  $2.54\e+02$ & $1.44\e+02$ & $1.00\e+02$ & $1.12\e+02$ \\  
	 Dot products & $2.08\e+07$ & $2.90\e+07$ & $8.56\e+07$ & $1.29\e+08$  \\
  	 \B Active features & $68.4 $ & $ 116.6$ & $3,349$ & $13,030$  \\
\hline
\textbf{Triazines} \T   & & & &  \\
	 Time (s)  & $2.75\e+01$  & $8.42\e+01$  & $4.27\e+01$ & $5.93\e+01$ \\
  	 Iterations &  $2.62\e+02$ & $1.59\e+02$ & $1.01\e+02$ & $1.11\e+02$ \\  
	 Dot products & $6.80\e+07$ & $1.01\e+08$ & $2.87\e+08$ & $4.67\e+08$  \\
  	 \B Active features & $ 150.0$ & $330.8$ & 29,104 & 130,303  \\
\hline
\textbf{E2006-tfidf} \T   & & & &  \\
	 Time (s)  & $9.10\e+00$  & $2.19\e+01 $  & $1.24\e+01$ & $2.27\e+01$ \\
  	 Iterations &  $3.48\e+02$ & $2.01\e+02$ & $1.06\e+02$ & $2.50\e+02$ \\  
  	 Dot products & $2.04\e+07$ & $3.03\e+07 $ & $5.97\e+07$ & $1.37\e+08$  \\
	 \B Active features & $ 149.5 $ & $ 275.3 $ & $444.8$ & $724.3$  \\
\hline
\T \textbf{E2006-log1p}  & & & &  \\
	 Time (s)    &  $1.60\e+02$  & $4.92\e+02 $ & $1.00\e+02$ & $1.42\e+02$ \\
	 Iterations & $3.55\e+02$ & $1.99\e+02$ & $1.11\e+02$ & $1.18\e+02$ \\  
  	 Dot products &  $5.73\e+08$ & $8.50\e+08$ & $1.78\e+09$ & $2.85\e+09$  \\
	 \B Active features & $ 281.3 $ & $ 1158.2 $ & 12,806 & 54,704  \\
\hline
\end{tabular}}
\caption{\label{largescale_CD} Results for the baseline solvers on the large-scale problems \textbf{Pyrim}, \textbf{Triazines}, \textbf{E2006-tfidf} and \textbf{E2006-log1p}.}
\end{footnotesize}
\end{table}

\begin{table}[h!!!]
\centering
\begin{footnotesize}
{\begin{tabular}{p{2.5cm}p{1.7cm}p{1.7cm}p{1.7cm}}
\hline
\T\B  & FW $1\%$ & FW $2\%$ & FW $3\%$   \\
\hline
\textbf{Pyrim} \T   & & &   \\
	 Time (s)  & $2.28\e-01$  & $4.47\e-01$  & $6.60\e-01$   \\
	 Speed-up  &$27.3\times$  & $13.9\times$  & $9.4\times$  \\
  	 Iterations &  $2.77\e+02$ & $2.80\e+02$ & $2.77\e+02 $  \\  
  	 DotProd & $7.61\e+05 $ & $1.53\e+06$ & $2.28\e+06$  \\
	 \B Active features & $27.6$ & $28.1$ & $27.9$  \\
\hline
\textbf{Triazines} \T   & & &   \\
	 Time (s)  & $2.61\e+00$  & $5.31e+00$  & $8.19\e+00$   \\
	 Speed-up  &$10.5\times$  & $5.2\times$  & $ 3.4\times$  \\
  	 Iterations &  $7.15\e+02$ & $7.29\e+02$ & $7.43\e+02 $  \\  
  	 DotProd & $5.18\e+06 $ & $1.06\e+07$ & $1.61\e+07$  \\
	 \B Active features & $120.6$ & $117.5$ & $118.7$  \\
\hline
\textbf{E2006-tfidf} \T   & & &   \\
	 Time (s)  & $8.83\e-01$  & $1.76\e+00$  & $2.74\e+00$   \\
	 Speed-up  &$10.3\times$  & $5.2\times$  & $ 3.3\times$  \\
  	 Iterations &  $1.27\e+03$ & $1.35\e+03$ & $1.41\e+03 $  \\  
  	 DotProd & $1.97\e+06 $ & $4.35\e+06$ & $6.84\e+06$  \\
	 \B Active features & $123.7$ & $125.8$ & $127.1$  \\
\hline
\textbf{E2006-log1p} \T   & & &   \\
	 Time (s)  & $1.93\e+01$  & $4.14\e+01$  & $6.59\e+01$   \\
	 Speed-up  &$8.3\times$  & $3.9\times$  & $ 2.4\times$  \\
  	 Iterations &  $1.75\e+03$ & $1.91\e+03$ & $1.99\e+03 $  \\  
  	 DotProd & $7.90\e+07 $ & $1.71\e+08$ & $2.68\e+08$  \\
	 \B Active features & $196.9$ & $199.8$ & $203.7$  \\
\hline
\end{tabular}}
\caption{\label{largescale_FW} Performance metrics for stochastic FW on the large-scale problems \textbf{Pyrim}, \textbf{Triazines}, \textbf{E2006-tfidf} and \textbf{E2006-log1p}.}
\end{footnotesize}
\end{table}

To better assess the effect of using an incremental algorithm in obtaining a sparse model, we plot in Figure \ref{sparsity_plots} the evolution of the number of active features along the path on problems \textbf{E2006-tfidf} and \textbf{E2006-log1p}. It can be clearly seen how CD and FW (with the latter performing the best overall) are able to recover sparser solutions, and can do so without losing accuracy in the model, as we show below.


\begin{figure}[h!!!]
\begin{center}
\subfigure[]{
   {\includegraphics[scale = 0.09]{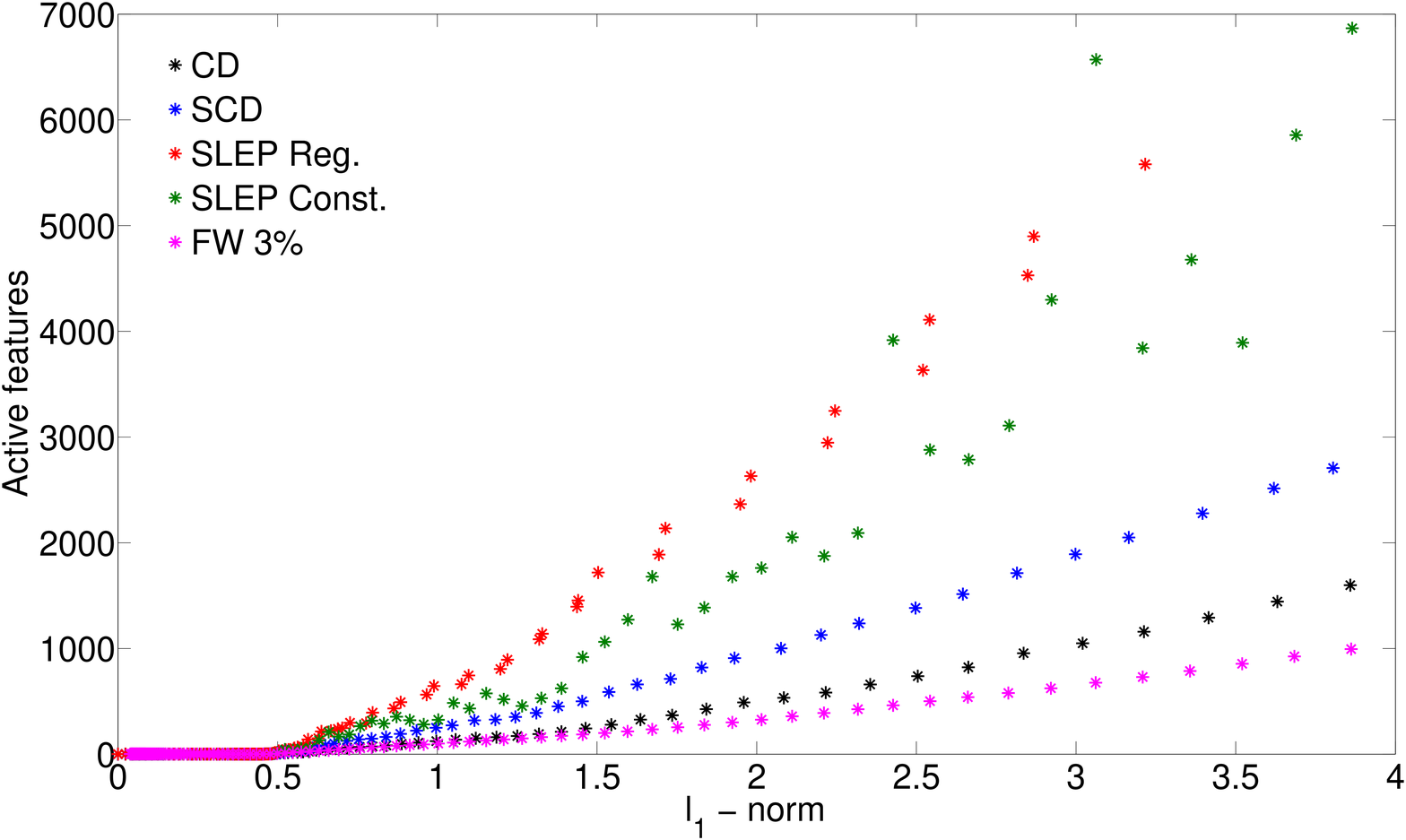}}}
\subfigure[]{
   {\includegraphics[scale = 0.09]{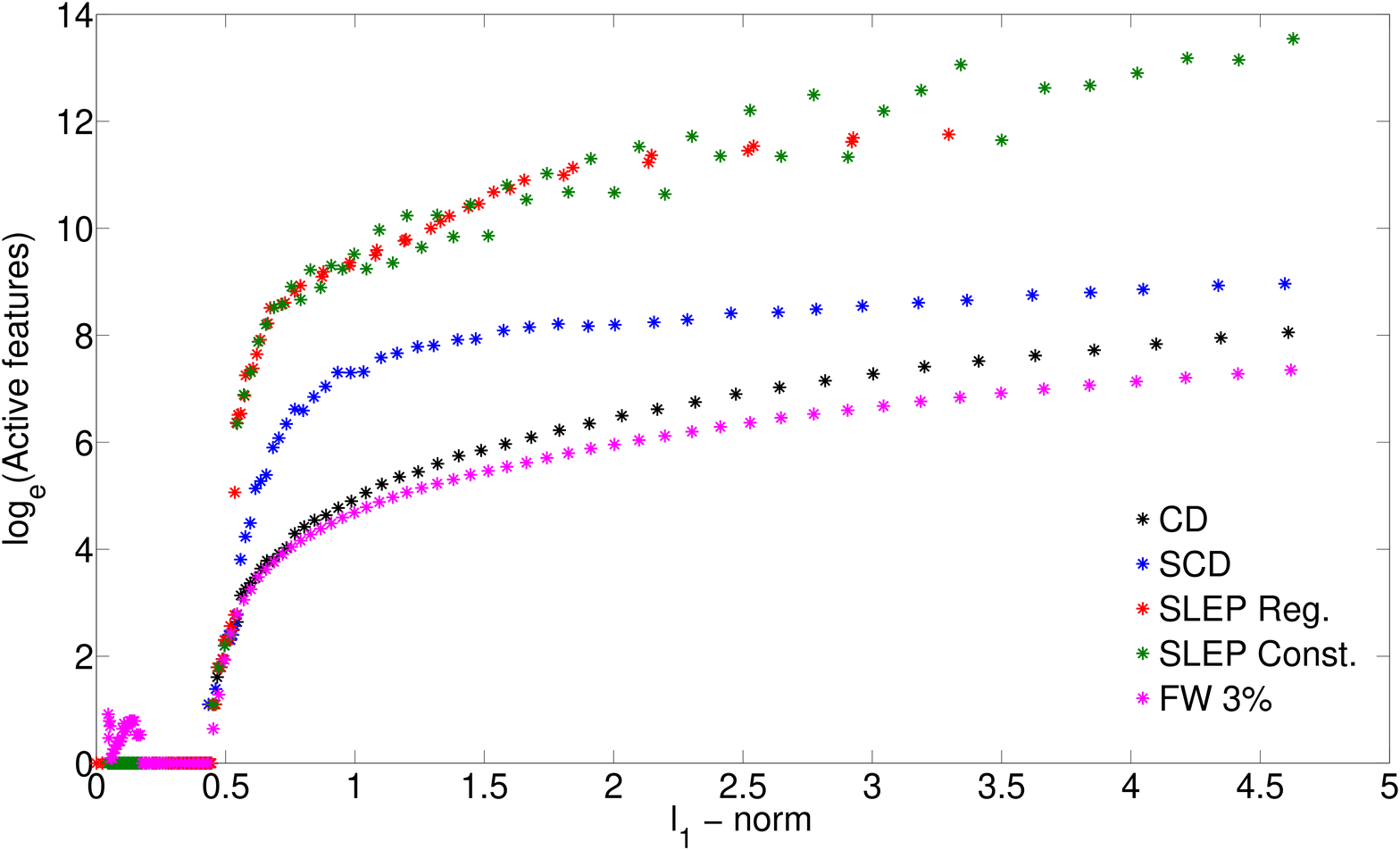}}}
\caption{\label{sparsity_plots} Sparsity patterns ($\ell_1$-norm vs. active coordinates) for problems \textbf{E2006-tfidf} (a) and \textbf{E2006-log1p} (b). The latter is plotted in a natural logarithmic scale due to the high number of features found by the SLEP solvers.} 
\end{center}
\end{figure} 

In order to evaluate the accuracy of the obtained models, we plot in Figures \ref{E2006_errors} and \ref{log1p_errors} the mean square error (MSE) against the $\ell_1$-norm of the solution along the regularization path, computed both on the original training set (curves \ref{E2006_errors}(a-c) and \ref{log1p_errors}(a-c)) and on the test set (curves \ref{E2006_errors}(b-d) and \ref{log1p_errors}(b-d)). Note that the value of the objective function in Problem (\ref{eq:LASSO}) coincides with the mean squared error (MSE) on the training set.
\begin{figure}
\begin{center}
\subfigure[]{
   {\includegraphics[scale = 0.1]{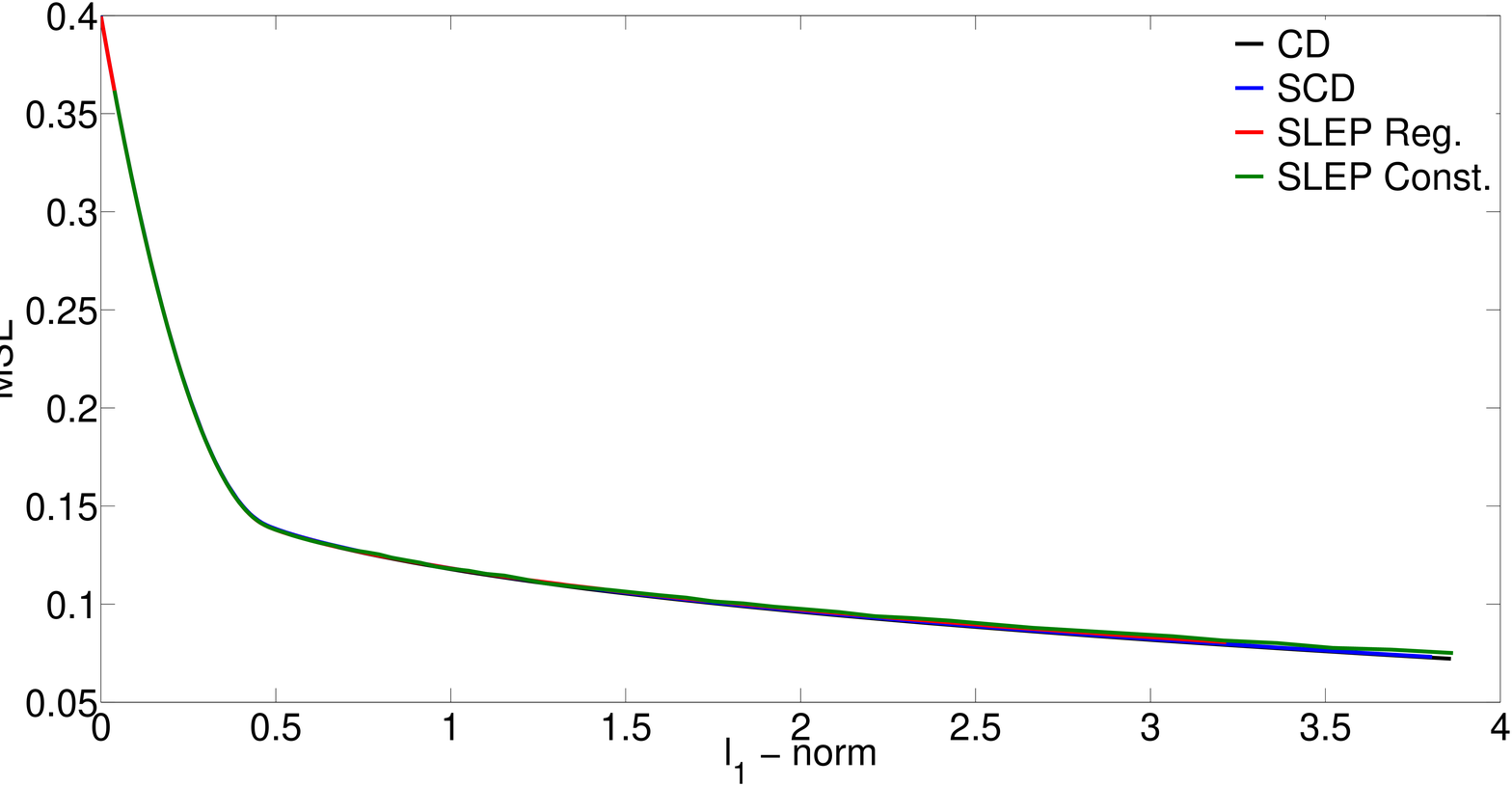}}}
\subfigure[]{
   {\includegraphics[scale = 0.1]{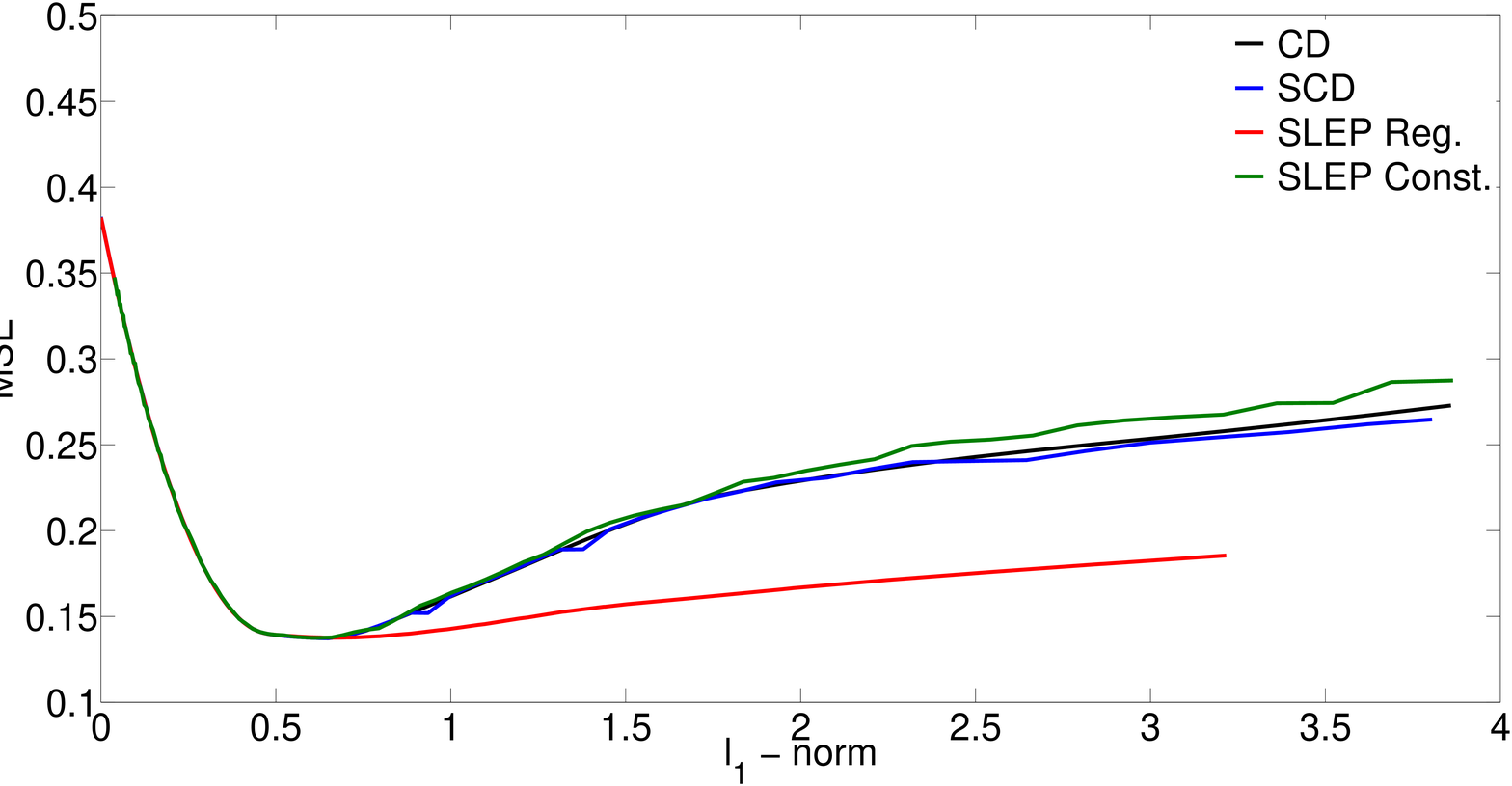}}}
\subfigure[]{
   {\includegraphics[scale = 0.1]{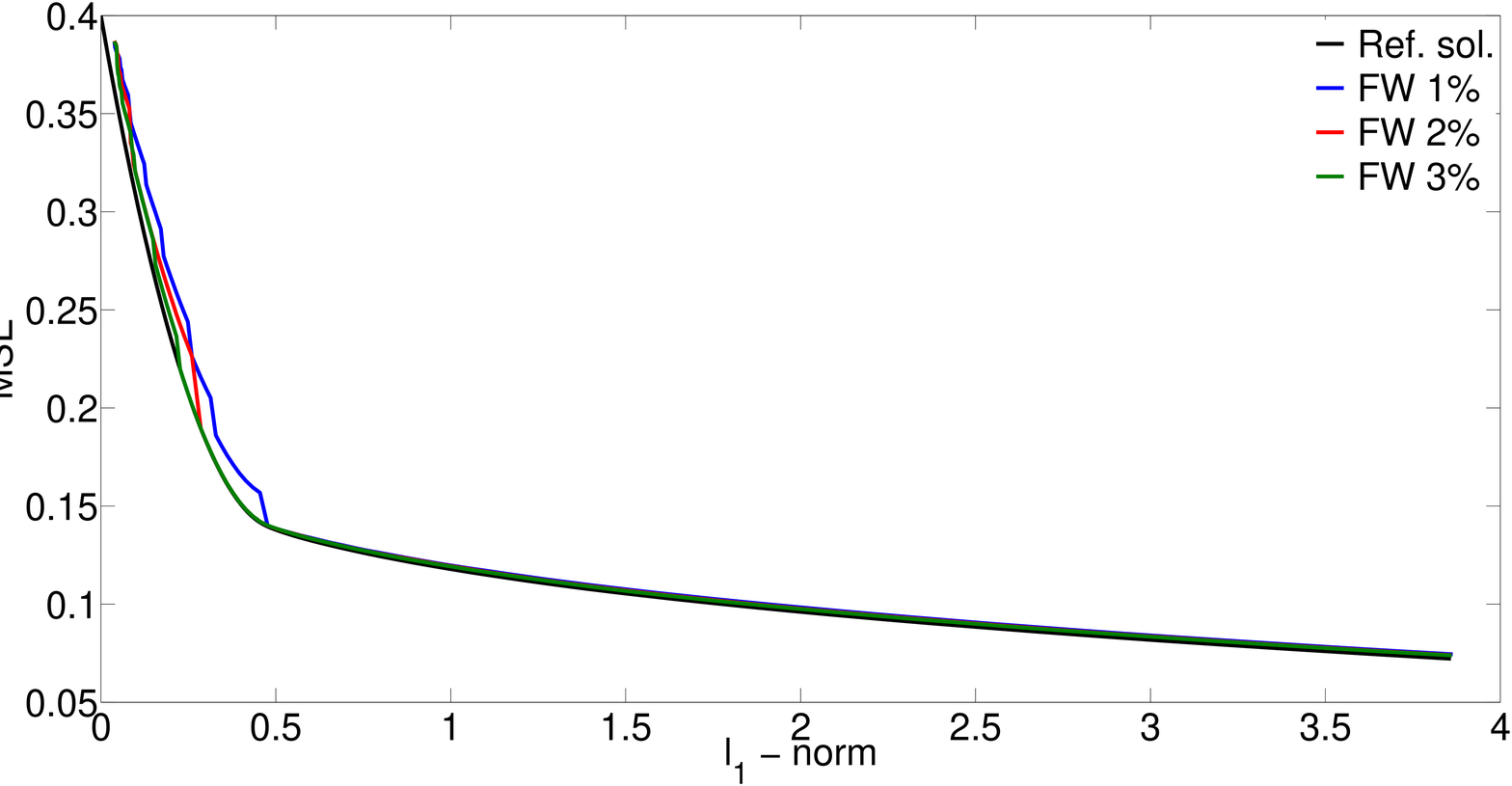}}}
\subfigure[]{
   {\includegraphics[scale = 0.1]{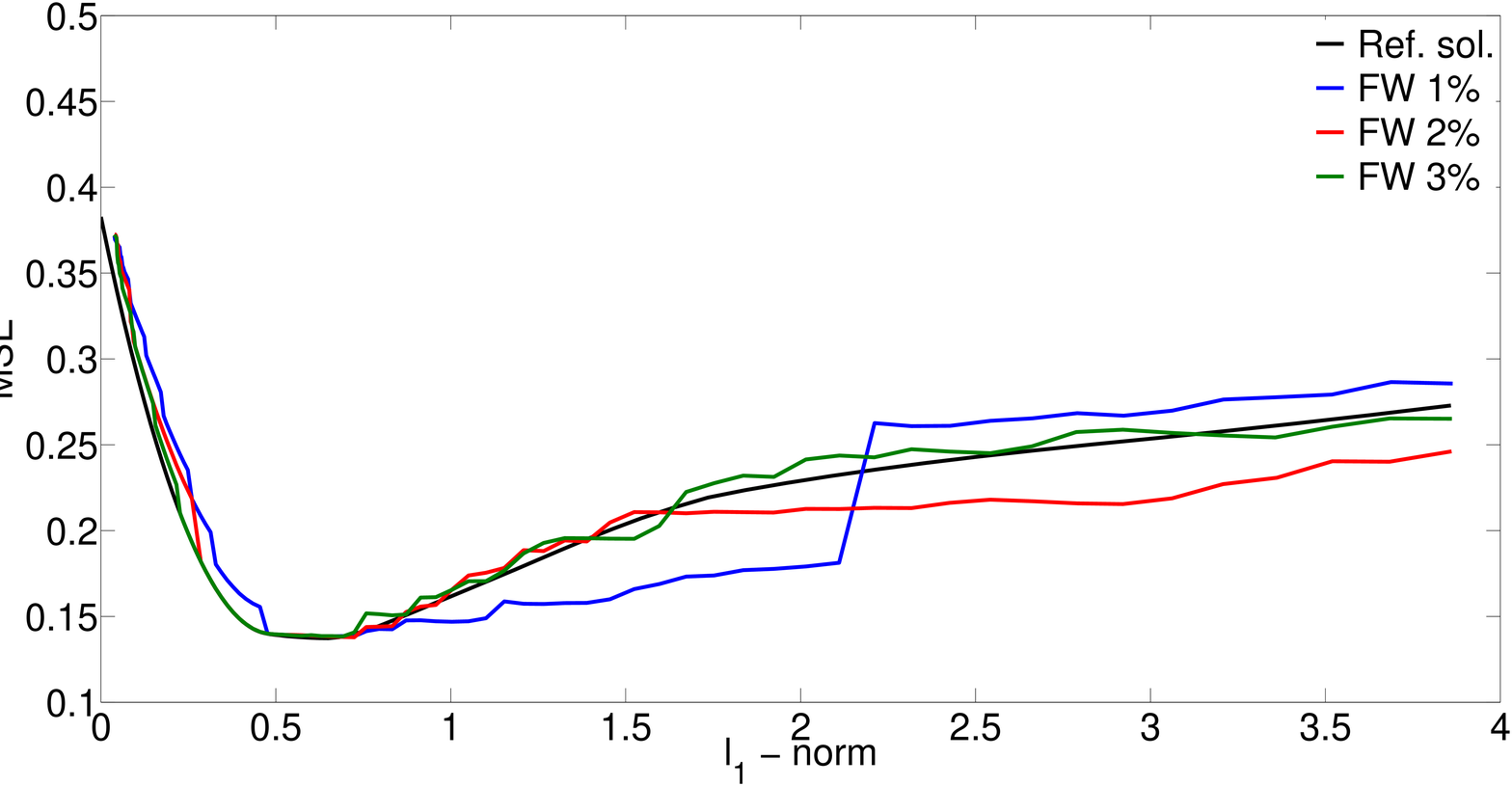}}}
\caption{\label{E2006_errors} Error curves ($\ell_1$-norm vs. MSE) for problem \textbf{E2006-tfidf}: on top, training error (a) and test error (b) for CD, SCD and SLEP; on bottom, training error (c) and test error (d) for stochastic FW.} 
\end{center}
\end{figure} 
\begin{figure}
\begin{center}
\subfigure[]{
   {\includegraphics[scale = 0.1]{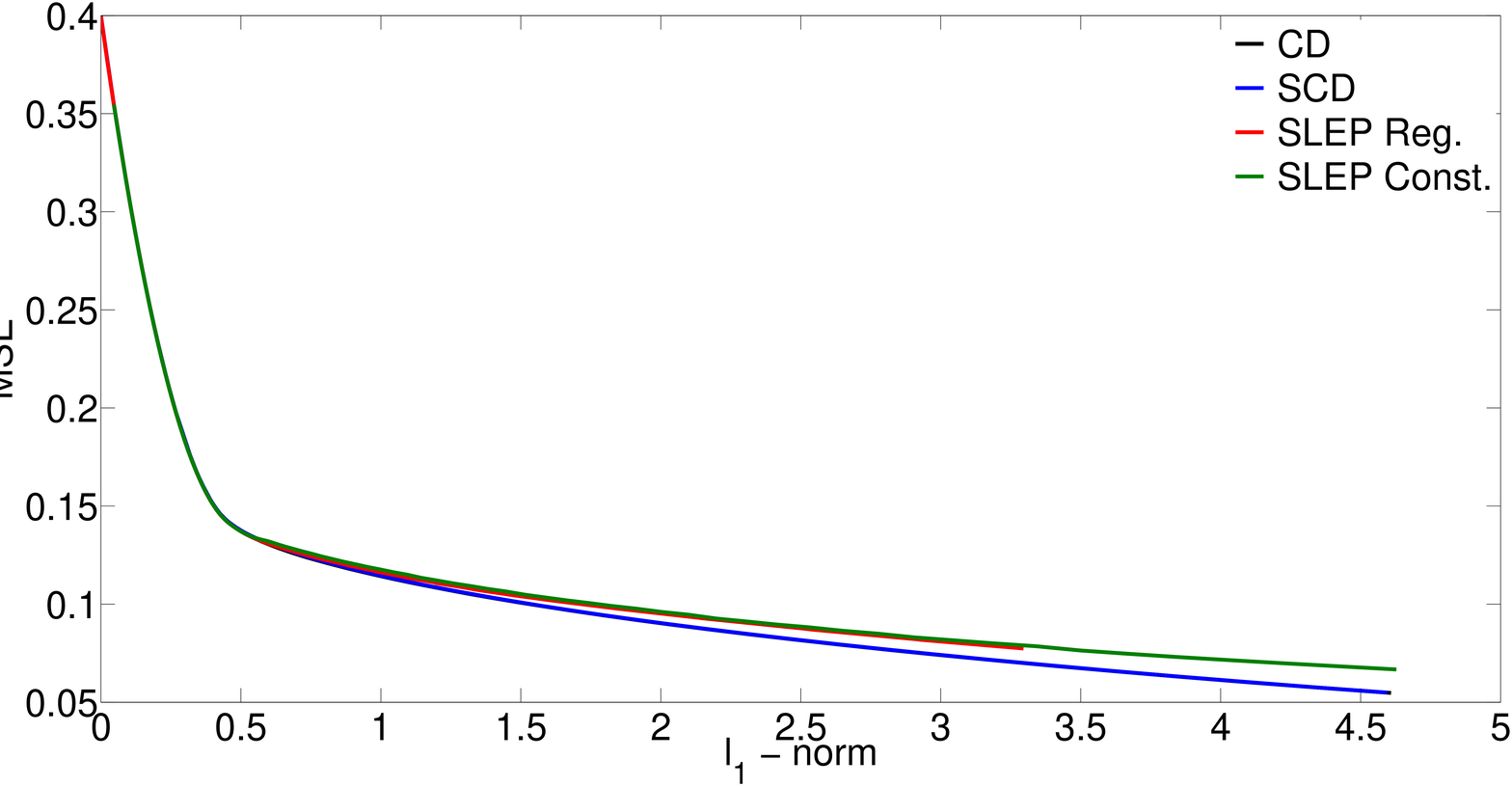}}}
\subfigure[]{
   {\includegraphics[scale = 0.1]{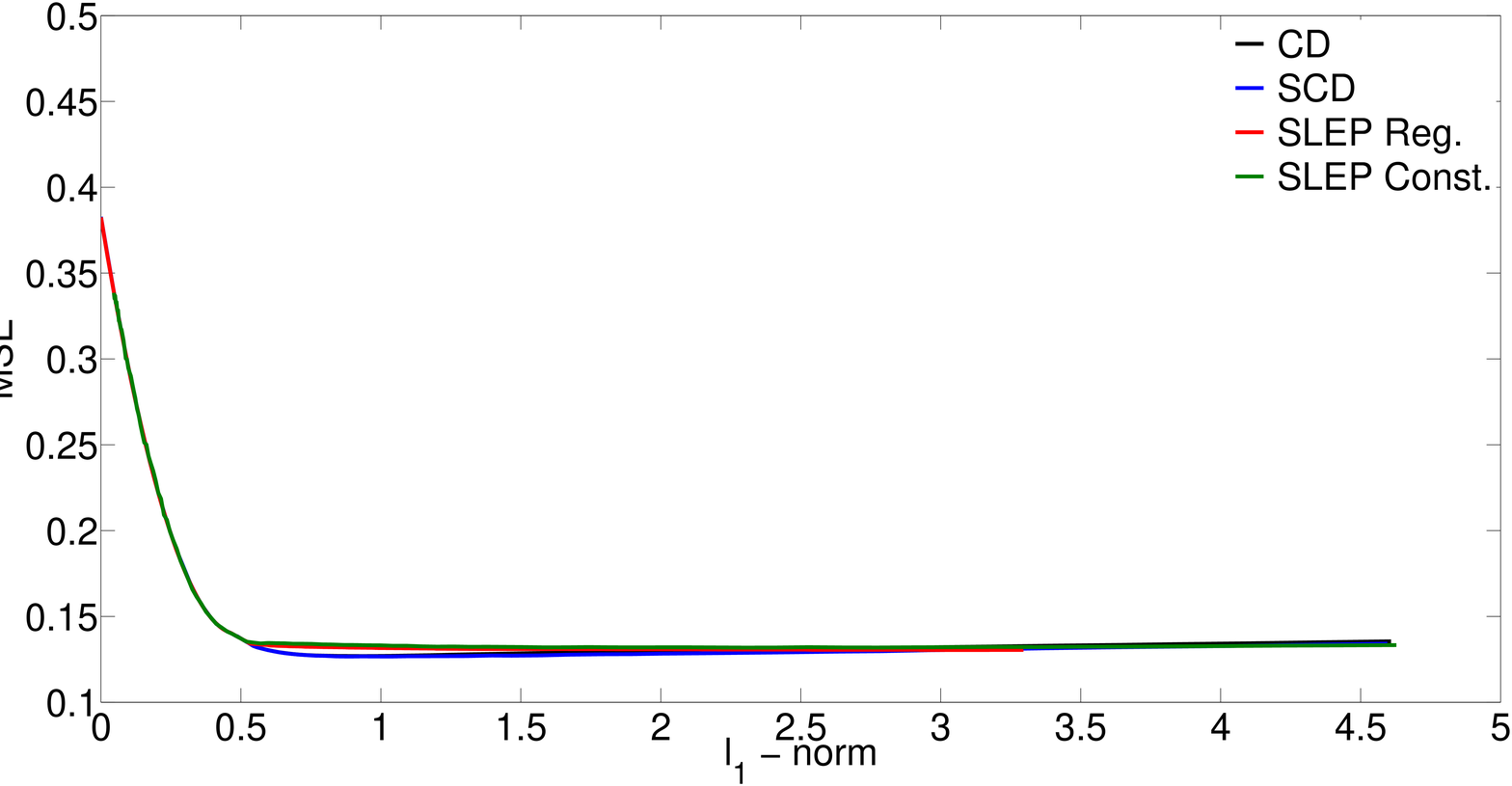}}}
\subfigure[]{
   {\includegraphics[scale = 0.1]{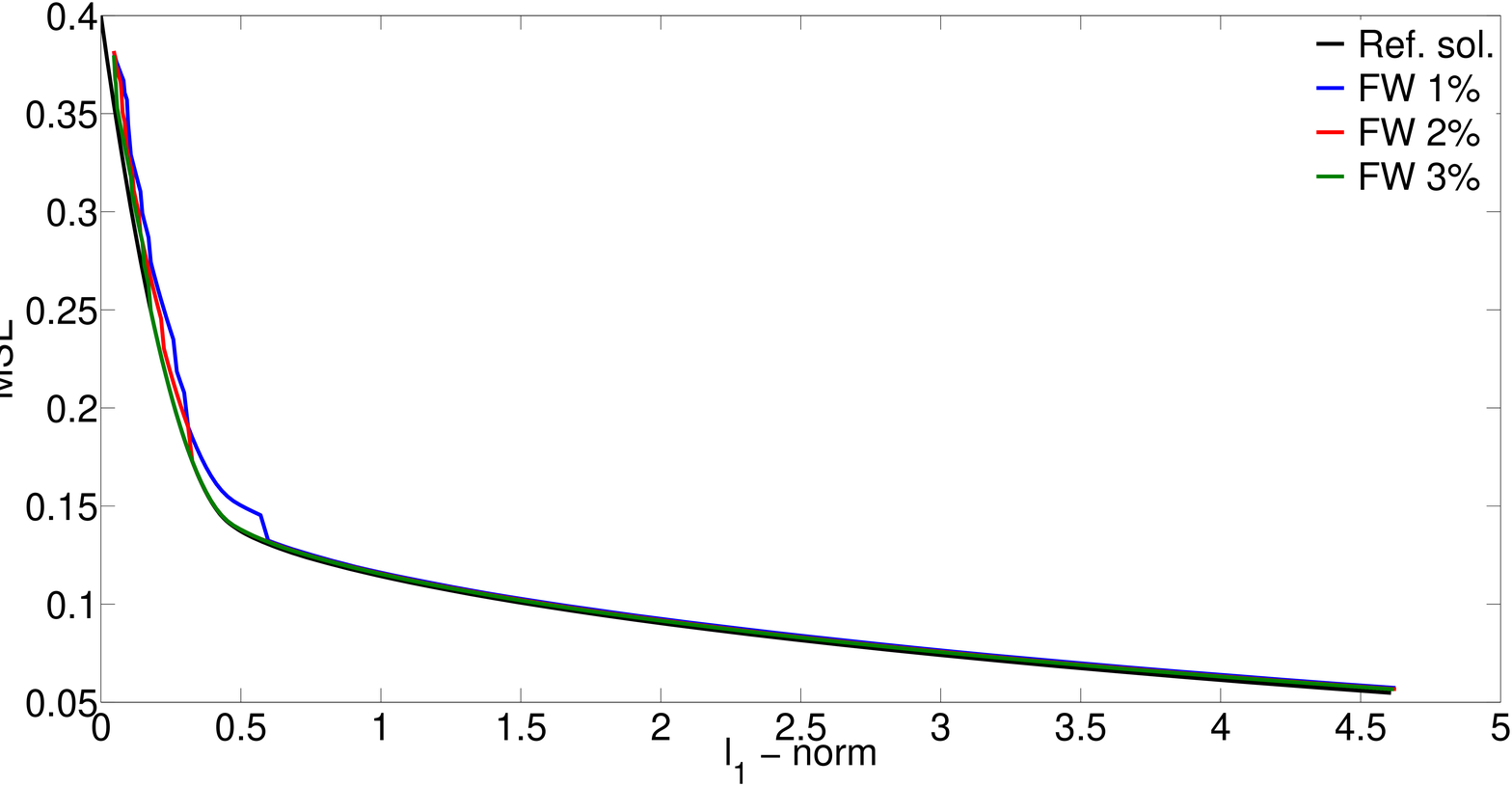}}}
\subfigure[]{
   {\includegraphics[scale = 0.1]{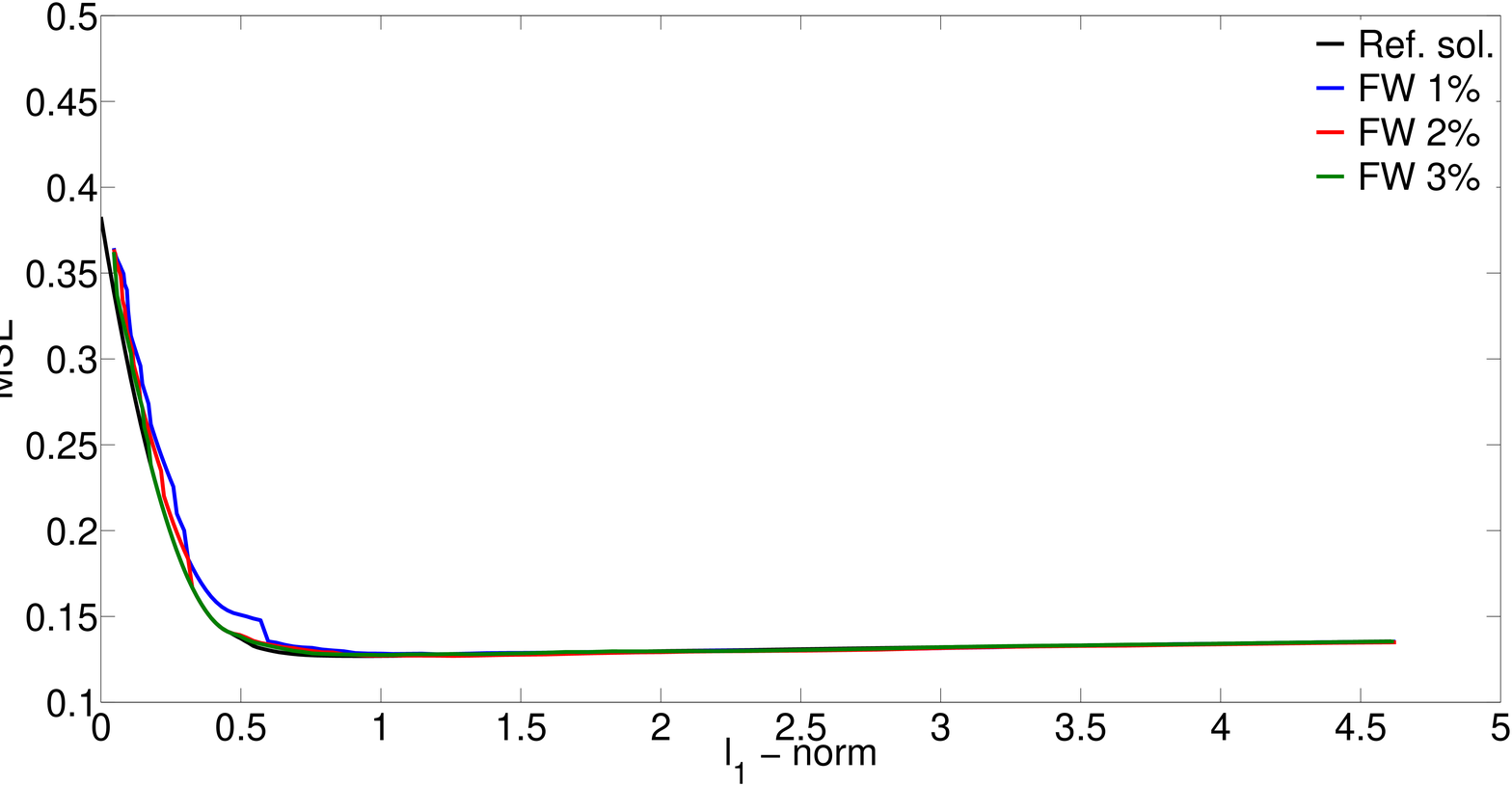}}}
\caption{\label{log1p_errors} Error curves ($\ell_1$-norm vs. MSE) for problem \textbf{E2006-log1p}: on top, training error (a) and test error (b) for CD, SCD and SLEP; on bottom, training error (c) and test error (d) for stochastic FW.}
\end{center}
\end{figure} 
First of all, we can see how the decrease in the objective value is basically identical in all cases, which indicates that with our sampling choices the use of a randomized algorithm does not affect the optimization accuracy. Second, the test error curves show that the predictive capability of all the FW models is competitive with that of the models found by the CD algorithm (particularly in the case of the larger problem \textbf{E2006-log1p}). It is also important to note that in all cases the best model, corresponding to the minimum of the test error curves, is found for a relatively low value of the constraint parameter, indicating that sparse solutions are preferable and that solutions involving more variables tend to cause overfitting, which is yet another incentive to use algorithms that can naturally induce sparsity. Again, it can be seen how the minima of all the curves coincide, indicating that all the algorithms are able to correctly identify the best compromise between sparsity and training error. The fact that we are able to attain the same models obtained by a state of the art algorithm such as Glmnet using a sampling size as small as $3\%$ of the total number of features is particularly noteworthy. Combined with the consistent advantages in CPU time over other competing solvers and its attractive sparsity properties, it shows how the randomized FW represents a solid, high-performance option for solving high-dimensional Lasso problems. 


\section{Conclusions and Perspectives}\label{conclusions}

In this paper, we have studied the practical advantages of using a randomized Frank-Wolfe algorithm to solve the constrained formulation of the Lasso regression problem on high-dimensional datasets involving a number of variables ranging from the hundred thousands to a few millions. We have presented a theoretical proof of convergence based on the expected value of the objective function. Our experiments show that we are able to obtain results that outperform those of other state-of-the-art solvers such as the Glmnet algorithm, a standard 
among practitioners, without sacrificing the accuracy of the model in a significant way. Importantly, our solutions are consistently more sparse than those found using several popular first-order methods, demonstrating the advantage of using an incremental, greedy optimization scheme in this context. 

In a future work, we intend to address the issue of whether it is possible to find suitable sampling conditions which can lead to a stronger stochastic convergence result, i.e. to certifiable probability bounds for approximate solutions. Finally, we remark that the proposed approach can be readily extended to other similar problems such as ElasticNet or more general $l_1$-regularized problems such as logistic regression, or to related applications such as the sparsification of SVM models. Another possibility to tackle various Lasso formulations is to exploit an equivalent formulation in terms of SVMs, an area where FW methods have already shown promising results. Together, all these elements strengthen the conclusions of our previous research work, showing that FW algorithms can provide a complete and flexible framework to efficiently solve a wide variety of large-scale Machine Learning and Data Mining problems.

\section*{Acknowledgments}
The research leading to these results has received funding from the European Research Council under the European Union's Seventh Framework Programme (FP7/2007-2013) / ERC AdG A-DATADRIVE-B (290923). This paper reflects only the authors' views and the Union is not liable for any use that may be made of the contained information. Research Council KUL: GOA/10/09 MaNet, CoE PFV/10/002 (OPTEC),
BIL12/11T; Flemish Government: FWO: projects: G.0377.12 (Structured systems), G.088114N (Tensor based data similarity); PhD/Postdoc grants; iMinds Medical Information Technologies SBO 2014; IWT:  POM II SBO 100031; Belgian Federal Science Policy Office: IUAP P7/19 (DYSCO, Dynamical systems, control and optimization, 2012-2017).
The second author received funding from CONICYT Chile through FONDECYT Project 11130122.
The first author thanks the colleagues from the Department of Computer Science and Engineering, University of Bologna, for their hospitality during the period in which this research was conceived.


\bibliographystyle{plain}      
\bibliography{LASSO_bibliography}   


\section*{Appendix: proof of Proposition 2}

Let $f$ be a convex differentiable real function on $\mathbb{R}^p$. Given $\cS \subseteq \{1,\ldots,p\}$, we define the \emph{restricted gradient} of $f$ with respect to $\cS$ as its scaled projection i.e. 
\begin{equation}\label{def:restricted_gradient}
\tilde \nabla_{\cS} f(\cdot) =  \frac{p}{\kappa}\Bigl(\sum_{i\in \mathcal{S}} \mathbf{e}_i \mathbf{e}_i^T \Bigr) \nabla f(\cdot)\,,
\end{equation}
where $\kappa = |\cS|$.
We define the \emph{curvature} constant of $f$ over a compact set $\Sigma$ as 
\begin{equation}\label{def:curvature}
C_{f} = \sup_{x\in\Sigma,y\in\Sigma_x} \frac{1}{\lambda^2} \left( f(y)-f(x)- (y-x)^T \nabla f(x) \right) \ ,
\end{equation}
where $\Sigma_x = \{y \in \Sigma: y= x + \lambda(s-x), s \in \Sigma, \lambda \in (0,1] \}$.

For any $\alpha \in \Sigma$ we define its primal gap and duality gap as 
\begin{align}
h(\alpha) &= f(\alpha) - f(\alpha^{\ast}) \label{def:primal_gap} \\
g(\alpha) &=  \max_{u\in\Sigma}\;(\alpha-u)^T\nabla f(\alpha) \label{def:dual_gap} \ ,
\end{align}
respectively. Convexity of the function $f$ implies that $f(\alpha)+(u-\alpha)^T\nabla f(\alpha)$ is nowhere greater than $f(\alpha)$. Therefore, 
\begin{equation} \label{eq:inequality_primal_dual_gaps}
g(\alpha) \geq h(\alpha) \;\; \forall \alpha \in \Sigma \ .
\end{equation}
For any iterate $\alpha^{(k)}$ generated by our algorithm, we define its \emph{expected primal gap and duality gap} as 
\begin{align}
h_{k+1} &= \mathbb{E}_{\:\mathcal{S}^{(k)}}\bigl[h(\alpha^{(k+1)})\bigr] \label{def:expected_primal_gap} \\
g_{k+1} &= \mathbb{E}_{\:\mathcal{S}^{(k)}}\bigl[g(\alpha^{(k+1)})\bigr] \label{def:expected_dual_gap} \ ,
\end{align}
respectively.  Here we denote by $\cS^{(k)}$ the random subset of $\{1,\ldots,p\}$ used to approximate the gradient at each iteration. Clearly,
\begin{equation}\label{eq:inequality_expected_primal_dual_gaps}
g_k \geq h_k \;\; \forall k \,.
\end{equation}

\begin{lemma}\label{main_lemma} Let $\alpha^{(k+1)}_\lambda = \alpha^{(k)} + \lambda(u^{(k)} - \alpha^{(k)})$ be a step in the direction of
\begin{equation}\label{eq:toward_vertex}
u^{(k)} \in \argmin_{u \,\in\, \Sigma} \, ( u-\alpha^{(k)} )^T \tilde \nabla_{\cS^{(k)}} f({\alpha}^{(k)}),
\end{equation}
with step-size $\lambda \in (0,1]$. Then
\begin{align} \label{eq:recurrence_main_lemma}
h_{k+1}(\lambda) &=\mathbb{E}_{\:\mathcal{S}^{(k)}}\bigl[h(\alpha^{(k+1)}_\lambda)\bigr] \leq h_{k} - \lambda g_k + \lambda^2 C_f
\end{align}
\begin{proof}
Since ${\alpha}^{(k)},u^{(k)} \in \Sigma$ and ${\alpha}^{(k+1)}_\lambda \in \Sigma_{{\alpha}^{(k)}}$, it follows from (\ref{def:curvature}) that
\begin{equation*}
f(\alpha^{(k+1)}_\lambda) \leq f(\alpha^{(k)}) + \lambda ( u^{(k)} -\alpha^{(k)} )^T \nabla f({\alpha}^{(k)}) +\lambda^2 C_{f} \ .
\end{equation*}
Expectation on both sides with respect to $\mathcal{S}^{(k)}$ yields 
\begin{equation}\label{eq:step_proof_main_lemma}
\begin{small}
\begin{aligned}
& \mathbb{E}_{\mathcal{S}^{(k)}}[f(\alpha^{(k+1)})] \leq f(\alpha^{(k)}) + \lambda\, \mathbb{E}_{\:\mathcal{S}^{(k)}}  [(u^{(k)}-\alpha^{(k)})^T \nabla f({\alpha}^{(k)})] +\lambda^2 {C}_f. 
\end{aligned}
\end{small}
\end{equation}
Since $\mathbb{E}_{\:\mathcal{S}^{(k)}} [ \tilde \nabla_{\cS} f(\alpha^{(k)})] = \nabla f(\alpha^{(k)})$, by the definition of $u^{(k)}$ and by the order preserving and linearity properties of expectation, we obtain

\begin{align} 
\mathbb{E}_{\:\mathcal{S}^{(k)}}  [ (u^{(k)}-\alpha^{(k)})^T \nabla f({\alpha}^{(k)})] & = \mathbb{E}_{\:\mathcal{S}^{(k)}} [\min_{u \,\in\, \Sigma} \,(u-\alpha^{(k)})^T \tilde \nabla_{\cS^{(k)}} f({\alpha}^{(k)})] \notag\\
& \leq \min_{u \,\in\, \Sigma} \, \mathbb{E}_{\:\mathcal{S}^{(k)}} [(u-\alpha^{(k)})^T\tilde \nabla_{\cS^{(k)}} f({\alpha}^{(k)})] \notag\\
& = \min_{u \,\in\, \Sigma} \,  (u-\alpha^{(k)})^T\nabla f({\alpha}^{(k)}) \notag\\
& = - g(\alpha^{(k)}) \label{eq:8} .
\end{align}
Substitution into~\eqref{eq:step_proof_main_lemma} and expectation with respect to $\cS^{(k-1)}$ finally yield
\begin{equation*}\label{eq:10}
\begin{aligned}
\mathbb{E}_{\:\mathcal{S}^{(k)}}[f(\alpha^{(k+1)}_\lambda)] \leq& \;\mathbb{E}_{\:\mathcal{S}^{(k-1)}} [f(\alpha^{(k)})] -\lambda\mathbb{E}_{\:\mathcal{S}^{(k-1)}} [g(\alpha^{(k)})]  +\lambda^2 {C}_f\,.
\end{aligned}     
\end{equation*}
Subtracting $f(\alpha^{\ast})$ from both sides, (\ref{def:expected_primal_gap}) and (\ref{def:expected_dual_gap}) yield the result.
\end{proof}
\end{lemma}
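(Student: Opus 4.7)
The plan is to follow the classical Frank-Wolfe descent inequality, with one extra move needed to neutralize the randomness in $u^{(k)}$.

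First I would invoke the curvature bound. Since $\alpha^{(k)}, u^{(k)} \in \Sigma$ and $\alpha^{(k+1)}_\lambda = \alpha^{(k)} + \lambda(u^{(k)}-\alpha^{(k)}) \in \Sigma_{\alpha^{(k)}}$, definition (\ref{def:curvature}) gives the deterministic inequality
\begin{equation*}
f(\alpha^{(k+1)}_\lambda) \;\leq\; f(\alpha^{(k)}) + \lambda\,(u^{(k)}-\alpha^{(k)})^T \nabla f(\alpha^{(k)}) + \lambda^2 C_f.
\end{equation*}
Note that the gradient appearing here is the \emph{true} gradient; the randomization only enters through the vertex $u^{(k)}$.

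Next I would take expectation with respect to $\cS^{(k)}$, conditioning on the history $\cS^{(0)},\ldots,\cS^{(k-1)}$ that produced $\alpha^{(k)}$. The only term that depends on $\cS^{(k)}$ is $u^{(k)}$, so the issue reduces to bounding $\mathbb{E}_{\cS^{(k)}}[(u^{(k)}-\alpha^{(k)})^T\nabla f(\alpha^{(k)})]$. The main obstacle is that $u^{(k)}$ is chosen to minimize a linear form in the \emph{restricted} gradient $\tilde\nabla_{\cS^{(k)}} f(\alpha^{(k)})$ rather than the true gradient, so I cannot simply apply the optimality of $u^{(k)}$ against $\nabla f(\alpha^{(k)})$.

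The trick is a two-step swap. By the definition of $u^{(k)}$,
\begin{equation*}
(u^{(k)}-\alpha^{(k)})^T \tilde\nabla_{\cS^{(k)}} f(\alpha^{(k)}) \;=\; \min_{u\in\Sigma}\,(u-\alpha^{(k)})^T \tilde\nabla_{\cS^{(k)}} f(\alpha^{(k)}).
\end{equation*}
Taking expectation and applying the generic inequality $\mathbb{E}[\min_u X_{\cS}(u)] \leq \min_u \mathbb{E}[X_{\cS}(u)]$ (valid since the minimum is at most any fixed choice of $u$), then using linearity and the identity $\mathbb{E}_{\cS^{(k)}}[\tilde\nabla_{\cS^{(k)}} f(\alpha^{(k)})] = \nabla f(\alpha^{(k)})$ from Lemma \ref{tech_lemma}, I obtain $\min_u (u-\alpha^{(k)})^T \nabla f(\alpha^{(k)}) = -g(\alpha^{(k)})$ by definition (\ref{def:dual_gap}). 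I still need to relate this back to $\mathbb{E}_{\cS^{(k)}}[(u^{(k)}-\alpha^{(k)})^T\nabla f(\alpha^{(k)})]$; this is where the fact that $\tilde\nabla_\cS f$ differs from $\nabla f$ only in expectation is used, together with $u^{(k)}$'s optimality, to conclude that $\mathbb{E}_{\cS^{(k)}}[(u^{(k)}-\alpha^{(k)})^T \nabla f(\alpha^{(k)})] \leq -g(\alpha^{(k)})$, which is exactly equation (\ref{eq:8}) in the excerpt.

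Finally, I would substitute this bound back into the curvature inequality, subtract $f(\alpha^*)$ from both sides, and then take expectation with respect to $\cS^{(k-1)}$ (the randomness that produced $\alpha^{(k)}$ and $g(\alpha^{(k)})$). Using the tower property and the definitions (\ref{def:expected_primal_gap}) and (\ref{def:expected_dual_gap}), the left side becomes $h_{k+1}(\lambda)$, while the right side collapses to $h_k - \lambda g_k + \lambda^2 C_f$, as claimed. The crux of the whole argument is the expectation-min swap together with the unbiasedness from Lemma \ref{tech_lemma}; everything else is bookkeeping that mirrors the standard deterministic FW descent lemma.
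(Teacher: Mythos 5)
Your proposal retraces the paper's own proof step for step: the curvature bound with the true gradient, expectation over $\cS^{(k)}$, the swap $\mathbb{E}[\min_u(\cdot)] \leq \min_u \mathbb{E}[(\cdot)]$, the unbiasedness $\mathbb{E}_{\cS^{(k)}}[\tilde\nabla_{\cS^{(k)}} f] = \nabla f$ from Lemma \ref{tech_lemma}, and the final tower-property bookkeeping. The difficulty is that the one step you explicitly defer --- relating $\mathbb{E}_{\cS^{(k)}}\bigl[(u^{(k)}-\alpha^{(k)})^T\nabla f(\alpha^{(k)})\bigr]$ back to the chain you built for the \emph{restricted} gradient --- is not bookkeeping but the crux, and the justification you sketch (``$\tilde\nabla_{\cS} f$ differs from $\nabla f$ only in expectation, together with $u^{(k)}$'s optimality'') cannot close it. Unbiasedness gives $\mathbb{E}_{\cS^{(k)}}\bigl[(u-\alpha^{(k)})^T\tilde\nabla_{\cS^{(k)}} f(\alpha^{(k)})\bigr] = (u-\alpha^{(k)})^T\nabla f(\alpha^{(k)})$ only for $u$ \emph{fixed}, i.e.\ independent of the sample; but $u^{(k)}$ is a function of $\cS^{(k)}$, so the random vertex and the random gradient estimate are correlated, and $\mathbb{E}_{\cS^{(k)}}\bigl[(u^{(k)}-\alpha^{(k)})^T\nabla f\bigr]$ need not equal $\mathbb{E}_{\cS^{(k)}}\bigl[(u^{(k)}-\alpha^{(k)})^T\tilde\nabla_{\cS^{(k)}} f\bigr]$.

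The step would in fact fail, not merely lacks detail. Take $p=2$, $\kappa=1$, $\Sigma$ the unit $\ell_1$-ball, and $\nabla f(\alpha^{(k)})=(g_1,g_2)$ with $|g_1|>|g_2|>0$; each singleton $\cS^{(k)}=\{i\}$ occurs with probability $1/2$ and yields $u^{(k)}=-\mathrm{sign}(g_i)\,\be_i$, so
\begin{equation*}
\mathbb{E}_{\cS^{(k)}}\bigl[(u^{(k)}-\alpha^{(k)})^T\nabla f(\alpha^{(k)})\bigr]
= -\tfrac{1}{2}\bigl(|g_1|+|g_2|\bigr) - (\alpha^{(k)})^T\nabla f(\alpha^{(k)})
\;>\; -|g_1| - (\alpha^{(k)})^T\nabla f(\alpha^{(k)}) = -g(\alpha^{(k)})\,,
\end{equation*}
strictly, contradicting the inequality (\ref{eq:8}) you claim to conclude; consequently the bound (\ref{eq:recurrence_main_lemma}) with the full gap $g_k$ also fails for sufficiently small $\lambda$ in this example. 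To be fair, you are in exactly the same position as the paper: its proof asserts your missing link as the unjustified first \emph{equality} in (\ref{eq:8}), silently replacing $\nabla f(\alpha^{(k)})$ by $\tilde\nabla_{\cS^{(k)}} f(\alpha^{(k)})$ when pairing with $u^{(k)}$, and the example above shows that identity is false in general. A sound argument has to work with the expected decrease the sampled vertex actually achieves (a weaker, ``partial'' duality gap, typically costing a factor such as $\kappa/p$) or impose additional conditions on the sampling; as written, neither your proposal nor the proof it mirrors establishes the lemma with $g_k$.
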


\begin{lemma}\label{lemma:global_bound} The initialization $\alpha^{(1)}=\bm{u}\ast$ with $\bu \ast \in \arg\min_{\bu \in \mathcal{V}(\Sigma)} f(\bu)$ guarantees $h_{k} \leq C_f \ \forall k > 1$. 
\end{lemma}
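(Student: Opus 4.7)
The plan is to proceed by induction on $k$, using Lemma~\ref{main_lemma} for the base case and the monotonicity afforded by the exact line search in step~4 of Algorithm~\ref{alg:FW} for the inductive step. Two auxiliary facts will be invoked: the duality gap dominates the primal gap, $g_k \geq h_k$ (inequality~\eqref{eq:inequality_expected_primal_dual_gaps}), and the line-search rule~\eqref{FW_line_search} admits $\lambda = 0$ as a feasible choice, which forces $f(\alpha^{(k+1)}) \leq f(\alpha^{(k)})$ pathwise.

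For the base case $k = 2$, I would apply Lemma~\ref{main_lemma} at iteration $k = 1$ with the specific step size $\lambda = 1$. Since $\alpha^{(1)} = \bu_{\ast}$ is a vertex of $\Sigma$ and therefore lies in $\Sigma$, the hypotheses of the lemma are satisfied and we obtain $h_2 \leq h_1 - g_1 + C_f$. Combining this with $g_1 \geq h_1$ yields $h_2 \leq C_f$ directly, without any further work.

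The inductive step is then essentially free. Assuming $h_k \leq C_f$ for some $k \geq 2$, the line search at iteration $k$ picks the optimal $\lambda \in [0,1]$; since $\lambda = 0$ is admissible and gives $\alpha^{(k+1)} = \alpha^{(k)}$, the produced iterate satisfies $f(\alpha^{(k+1)}) \leq f(\alpha^{(k)})$ for every realization of the random samples. Subtracting $f(\alpha^{\ast})$ and taking expectation (order-preserving and linear) yields $h_{k+1} \leq h_k \leq C_f$, closing the induction.

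The main obstacle I anticipate is purely notational: making the expectation bookkeeping fully rigorous. Since $\alpha^{(k)}$ depends on all the samples $\mathcal{S}^{(0)}, \ldots, \mathcal{S}^{(k-1)}$, the symbol $h_k = \mathbb{E}_{\mathcal{S}^{(k-1)}}[h(\alpha^{(k)})]$ in~\eqref{def:expected_primal_gap} should be read as a total (or iterated) expectation over all preceding samples, as is implicit in the proof of Lemma~\ref{main_lemma}. Once this convention is adopted, the pathwise monotonicity granted by the line search transfers cleanly to monotonicity in expectation, and no further delicate estimate is required beyond what Lemma~\ref{main_lemma} already provides.
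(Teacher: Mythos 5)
Your proof is correct, but it takes a genuinely different route from the paper's. The paper argues by contradiction: it first establishes the monotonicity $h_{k+1}\leq h_{k}$, then supposes $h_{k}>C_f$, applies Lemma~\ref{main_lemma} with $\lambda=1$ to get a strict decrease, and tries to close the contradiction by comparing $\mathbb{E}_{\mathcal{S}^{(k)}}[h(\bu^{(k)})]$ with $h(\bu\ast)=h_1$ --- which is the only place where the initialization $\alpha^{(1)}=\bu\ast$ enters. Your argument is direct: Lemma~\ref{main_lemma} at $\lambda=1$ together with $g_k\geq h_k$ gives $h_{k+1}\leq h_{k+1}(1)\leq h_k-g_k+C_f\leq C_f$ for every $k\geq 1$, and line-search monotonicity then propagates the bound (in fact this one-line estimate applies at every iteration, so even the induction is optional). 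Your route buys two things: it dispenses with the contradiction scaffolding, and it reveals that the hypothesis on the initialization is not actually needed --- the bound holds for any $\alpha^{(1)}\in\Sigma$. It also sidesteps a questionable step in the paper's version, where the inequality $\mathbb{E}_{\mathcal{S}^{(k)}}[h(\bu^{(k)})]\leq\mathbb{E}_{\mathcal{S}^{(k)}}[h(\bu\ast)]$ is written in the wrong direction (since $\bu\ast$ minimizes $f$ over the vertices, one has $h(\bu^{(k)})\geq h(\bu\ast)$). The only point to make explicit in your base case is that $h_2\leq h_2(1)$: Lemma~\ref{main_lemma} bounds the gap of the hypothetical $\lambda=1$ iterate, so you need the (pathwise, hence in expectation) fact that the line-search iterate does at least as well as the $\lambda=1$ step before concluding $h_2\leq h_1-g_1+C_f$; you use this implicitly, and it is the same fact you invoke for monotonicity.
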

\begin{proof} 
First, note that $h_{k+1} \leq h_{k}$ $\forall k > 1$. Indeed, 
\begin{align*}
\min_{\lambda \in (0,1]} h(\alpha^{(k+1)}_\lambda) = \min_{\lambda \in (0,1]} h(\alpha^{(k)} + \lambda(u^{(k)} - \alpha^{(k)}) ) \leq h(\alpha^{(k)})\,.
\end{align*}
Thus
\begin{align*}
h_{k+1} = \mathbb{E}_{\:\mathcal{S}^{(k)}}\bigl[h(\alpha^{(k+1)}) \bigr] &= \mathbb{E}_{\:\mathcal{S}^{(k)}}\left[ \min_{\lambda \in (0,1]} h(\alpha^{(k+1)}_\lambda)\right]\\
&\leq  \mathbb{E}_{\:\mathcal{S}^{(k)}}\bigl[ h(\alpha^{(k)}) \bigr] = h_k\,.
\end{align*}
Now, from Lemma \ref{main_lemma}, any step in the direction of (\ref{eq:toward_vertex}) with step size $\lambda \in (0,1]$ satisfies
\begin{align*} 
h_{k+1}(\lambda) &=\mathbb{E}_{\:\mathcal{S}^{(k)}}\bigl[h(\alpha^{(k+1)}_\lambda \bigr] \leq h_{k} - \lambda g_k + \lambda^2 C_f  \leq h_{k} - \lambda h_k + \lambda^2 C_f \ .
\end{align*}
Suppose $h_{k} > C_f$. In this case, as $- \lambda h_k + \lambda^2 C_f < 0$, we can choose $\lambda=1$ to obtain
\begin{align*} 
\left.h_{k+1}(\lambda)\right|_{\lambda=1} &< h_{k} \ .
\end{align*}
But
\begin{align*}
\left.h_{k+1}(\lambda)\right|_{\lambda=1} =\mathbb{E}_{\:\mathcal{S}^{(k)}}\bigl[h(\bu^{(k)})\bigr] \leq \mathbb{E}_{\:\mathcal{S}^{(k)}}\bigl[h(\bu\ast)\bigr] = \mathbb{E}_{\:\mathcal{S}^{(k)}}\bigl[h(\alpha^{(1)})\bigr] = h_1\ .
\end{align*}
Thus, $h_1 < h_{k+1}$. This is a contradiction, since $h_{k+1} \leq h_{k}$ $\forall k > 1$.
\end{proof}

\begin{lemma}\label{lemma:recurrence_for_induction} At each iteration $k$ of Algorithm $2$,
\begin{align} \label{eq:recurrence_for_induction}
h_{k+1} &\leq h_{k} - \frac{h_{k}^2}{4C_f}\,.
\end{align}
\end{lemma}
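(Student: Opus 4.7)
The plan is to combine Lemma \ref{main_lemma} with the primal/dual inequality~\eqref{eq:inequality_expected_primal_dual_gaps} and then optimize the resulting quadratic upper bound over the step size $\lambda \in (0,1]$. First I would start from the recurrence of Lemma \ref{main_lemma},
\[
h_{k+1}(\lambda) \leq h_{k} - \lambda g_k + \lambda^2 C_f,
\]
and use $g_k \geq h_k$ (from~\eqref{eq:inequality_expected_primal_dual_gaps}) to replace the dual gap by the primal gap:
\[
h_{k+1}(\lambda) \leq h_{k} - \lambda h_k + \lambda^2 C_f.
\]

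Next I would minimize the right-hand side over $\lambda$. Setting the derivative to zero gives $\lambda^{\ast} = h_k/(2C_f)$, which lies in $(0,1]$ precisely because Lemma \ref{lemma:global_bound} ensures $h_k \leq C_f$ (so $\lambda^{\ast} \leq 1/2$). Plugging $\lambda^{\ast}$ into the bound yields
\[
h_{k+1}(\lambda^{\ast}) \leq h_k - \frac{h_k^{2}}{2C_f} + \frac{h_k^{2}}{4C_f} = h_k - \frac{h_k^{2}}{4C_f}.
\]

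Finally, I would use the fact that the actual step size $\lambda^{(k)}$ in Algorithm 2 is chosen by exact line search (see~\eqref{FW_line_search} and~\eqref{eq:obj_and_line_search}), so the realized iterate has no larger objective value than the iterate produced by any admissible fixed $\lambda$, including $\lambda^{\ast}$. Taking expectations and subtracting $f(\alpha^{\ast})$ then gives $h_{k+1} \leq h_{k+1}(\lambda^{\ast})$, which concludes the proof.

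The only subtle step is the admissibility check $\lambda^{\ast} \in (0,1]$; this is exactly the reason Lemma \ref{lemma:global_bound} was proved beforehand, so the appeal to $h_k \leq C_f$ there is what makes the clean quadratic decrease in $h_k$ valid at every iteration rather than only asymptotically. Everything else is routine algebra on the quadratic upper bound and the standard FW trick of combining convexity with the curvature-constant inequality.
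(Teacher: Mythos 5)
Your proof is correct and follows essentially the same route as the paper's: apply Lemma \ref{main_lemma}, weaken $g_k$ to $h_k$ via \eqref{eq:inequality_expected_primal_dual_gaps}, minimize the quadratic bound at $\lambda^{\ast}=h_k/(2C_f)$ (admissible by Lemma \ref{lemma:global_bound}), and use the exact line search to dominate any fixed step size. Your final step of passing from the per-realization line-search optimality to the bound on $h_{k+1}$ is precisely the expectation--minimum interchange the paper writes out explicitly, so nothing is missing.
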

\begin{proof}
At iteration $k$, Algorithm $2$ updates $\alpha^{(k)}$ by a line search in the direction of (\ref{eq:toward_vertex}). Hence
\begin{align} \label{eq:line_search}
h_{k+1} = \mathbb{E}_{\:\mathcal{S}^{(k)}}\bigl[h(\alpha^{(k+1)}) \bigr] =\mathbb{E}_{\:\mathcal{S}^{(k)}}\left[ \min_{\lambda \in (0,1]} h(\alpha^{(k+1)}_\lambda)\right]\,.
\end{align}

By the order preserving and linearity properties of expectation 
\begin{align} \label{eq:stefanos_lemma}
\mathbb{E}_{\:\mathcal{S}^{(k)}}\left[\min_{\lambda \in (0,1]} h(\alpha^{(k+1)}_\lambda)\right] \leq \min_{\lambda \in (0,1]} \mathbb{E}_{\mathcal{S}^{(k)}}\left[h(\alpha^{(k+1)}_\lambda)\right]\,.
\end{align}
From lemma \ref{main_lemma}, we have that any step in the direction of (\ref{eq:toward_vertex}) with step size $\lambda$ satisfies
\begin{align} \label{eq:previous_lemma}
h_{k+1}(\lambda) &=\mathbb{E}_{\:\mathcal{S}^{(k)}}\bigl[h(\alpha^{(k+1)}_\lambda \bigr] \leq h_{k} - \lambda g_k + \lambda^2 C_f \leq  h_{k} - \lambda h_k + \lambda^2 C_f \,.
\end{align}
Combining (\ref{eq:previous_lemma}) and (\ref{eq:stefanos_lemma}) produces
\begin{align} \label{eq:main_bound_new_lemma}
h_{k+1} = \min_{\lambda \in (0,1]} h_{k+1}(\lambda) &\leq \min_{\lambda \in (0,1]} \left( h_{k} - \lambda h_k + \lambda^2 C_f \right) \,.
\end{align}
From lemma \ref{lemma:global_bound}, $h_k < 2C_f$. Thus, the minimum at the right hand side is obtained for $\bar \lambda = h_k/2C_f$ (take derivative, equal to $0$, solve and check that $\bar \lambda<1$). Substituting this value of $\lambda$ yields
\begin{align} \label{eq:main_bound_new_lemma}
h_{k+1} &\leq  h_{k} - \frac{h_k^2}{2C_f} + \frac{h_k^2}{4C_f} = h_{k} - \frac{h_k^2}{4C_f}  \ .
\end{align}
\end{proof}

\noindent{\textbf{Proof of Proposition 2}.} With the above results in hand, we can now prove the convergence result in the main paper i.e.
\begin{equation*} 
h_k =  \mathbb{E}_{\mathcal{S}^{(k)}}\left[f(\alpha^{(k+1)})\right] - f(\alpha^{\ast}) \leq \frac{4C_f}{k+2}\,.
\end{equation*}
  
\begin{proof}
We prove the claim by induction on $k$. The base case $k = 0$ is trivial to verify from lemma \ref{lemma:global_bound} (as $4/3>1$). Now, from Lemma \ref{lemma:recurrence_for_induction} and the inductive hypothesis $h_k \leq \frac{4 {C}_f}{k+2}$, we obtain 
\begin{align*}\label{eq:induction}
h_{k+1} & \leq  h_{k} - \frac{h_k^2}{4C_f} \leq \frac{h_k}{1+ \frac{h_k}{4C_f}} = \frac{1}{\frac{1}{h_k} + \frac{1}{4C_f}} \leq \frac{1}{\frac{k+2}{4C_f} + \frac{1}{4C_f}} = \frac{4C_f}{(k+1)+2}\,.
\end{align*}
which completes the inductive step and yields the claimed bound.
\end{proof}

\end{document}